\newif\ifautomatica
\let\theoremstyle\relax
\let\parens\undefined
\newcommand{\parens}[1]{{\left(#1\right)}}
\let\braces\undefined
\newcommand{\braces}[1]{{\left\{#1\right\}}}
\let\abs\undefined
\newcommand{\abs}[1]{{\lvert#1\rvert}}
\let\norm\undefined
\newcommand{\norm}[1]{{\lVert#1\rVert}}
\let\mrm\undefined
\newcommand{\mrm}[1]{{\mathrm{#1}}}
\let\mbb\undefined
\newcommand{\mbb}[1]{{\mathbb{#1}}}
\let\mbf\undefined
\newcommand{\mbf}[1]{{\mathbf{#1}}}
\let\mcal\undefined
\newcommand{\mcal}[1]{{\mathcal{#1}}}
\let\msf\undefined
\newcommand{\msf}[1]{{\mathsf{#1}}}
\theoremstyle{plain}
\newtheorem{theorem}{Theorem}
\newtheorem{lemma}{Lemma}
\newtheorem{remark}{Remark}
\title{\LARGE \bf
Chordal Sparsity for SDP-based Neural Network Verification
}
\author{Anton Xue${}^\star$, Lars Lindemann${}^\dagger$, Rajeev Alur ${}^\star$% <-this % stops a space
    \thanks{${}^\star$ Anton Xue and Rajeev Alur are with the Department of Computer and Information Science, University of Pennsylvania, Philadelphia, PA, USA.
    {\tt\small \{antonxue,alur\}@seas.upenn.edu}
    }
    \thanks{${}^\dagger$ Lars Lindemann is with the Thomas Lord Department of Computer Science, University of Southern California, Los Angeles, CA, USA.
    {\tt\small llindema@usc.edu}
    }
}
\begin{document}

\maketitle

\thispagestyle{plain}
\pagestyle{plain}

\begin{abstract}

Neural networks are central to many emerging technologies, but verifying their correctness remains a major challenge.
It is known that network outputs can be sensitive and fragile to even small input perturbations, thereby increasing the risk of unpredictable and undesirable behavior.
Fast and accurate verification of neural networks is therefore critical to their widespread adoption, and in recent years, various methods have been developed as a response to this problem.
In this paper, we focus on improving semidefinite programming (SDP) based techniques for neural network verification.
Such techniques offer the power of expressing complex geometric constraints while retaining a convex problem formulation, but scalability remains a major issue in practice.
Our starting point is the DeepSDP framework proposed by Fazlyab et al., which uses quadratic constraints to abstract the verification problem into a large-scale SDP.
However, solving this SDP quickly becomes intractable when the network grows.
Our key observation is that by leveraging \emph{chordal sparsity}, we can decompose the primary computational bottleneck of DeepSDP --- a large linear matrix inequality (LMI) --- into an equivalent collection of smaller LMIs.
We call our chordally sparse optimization program \emph{Chordal-DeepSDP} and prove that its construction is identically expressive as that of DeepSDP.
Moreover, we show that additional analysis of Chordal-DeepSDP allows us to further rewrite its collection of LMIs in a second level of decomposition that we call \emph{Chordal-DeepSDP-2} --- which results in another significant computational gain.
Finally, we provide numerical experiments on real networks of learned cart-pole dynamics, showcasing the computational advantage of Chordal-DeepSDP and Chordal-DeepSDP-2 over DeepSDP.

\end{abstract}

\section{Introduction}
% https://arxiv.org/abs/2206.07811

% https://arxiv.org/pdf/2102.12981.pdf

% The success of neural networks is well documented in the literature for various applications, e.g., in  Go \cite{silver2018general}, in handwritten character recognition \cite{lecun1998gradient}, and in autonomous driving, \cite{dosovitskiy2017carla}.

The success of neural networks is well-documented in dynamical systems and control~\cite{yeung2019learning,dosovitskiy2017carla}.
Neural networks are, however, notoriously opaque.
The key challenges in analyzing their behavior are two-fold: first, one must appropriately model the nonlinear activation functions; second, the neural network may be very large.
In addition, neural networks are often sensitive to input perturbations~ \cite{goodfellow2014explaining}, meaning that test-driven methods may insufficiently cover the input domain.
Hence, It is often unclear what a neural network learns and how specific desirable properties can be verified.
This is of particular concern in safety-critical applications like autonomous driving, where the lack of formal verification guarantees poses a serious barrier to adoption.
We focus on \emph{safety verification}: given a neural network \(f:\mathbb{R}^{n_1}\to\mathbb{R}^m\) and a specification \(\mcal{S} \subseteq \mcal{X} \times \mbb{R}^m\) where \(\mcal{X} \subseteq \mbb{R}^{n_1}\) is the input domain, does \(f\) satisfy \(\mcal{S}\)?
In other words, is it the case that \((x, f(x)) \in \mcal{S}\) for all \(x \in \mcal{X}?\)
% \begin{align*}
%     (x, f(x)) \in \mcal{S} \enskip \text{for all} \enskip x \in \mcal{X}?
% \end{align*}

In this work, we investigate the properties of semidefinite programming (SDP)-based methods for neural network verification.
SDPs are interesting because they can naturally encode tight relaxations of quadratic relations, allowing for strictly more expressive specifications and abstractions beyond commonly used linear ones~\cite{katz2019marabou,muller2022prima,tjeng2017evaluating}.
Our starting point is the DeepSDP verification framework presented in \cite{fazlyab2020safety}, focusing on improving scalability.
DeepSDP uses quadratic constraints to abstract nonlinear activation functions, yielding a convex relaxation of the safety verification problem as a large SDP.
When the neural network grows larger, however, solving DeepSDP quickly becomes intractable --- a common issue in SDP-based methods --- thus motivating our work.

We observe that although DeepSDP~\cite{fazlyab2020safety} itself does not exploit any sparsity to improve scalability, it was recently noted in \cite{newton2021exploiting} that a particular instantiation of DeepSDP for neural networks with ReLU action functions admits in its LMI constraint \emph{chordal sparsity}~\cite{agler1988positive,vandenberghe2015chordal,zheng2019chordal} --- the key computational bottleneck.
This insight allows the authors of \cite{newton2021exploiting} to decompose and efficiently solve this sparsified version of DeepSDP.
This approach improves scalability but omits two features critical to DeepSDP.
First, the authors make a restrictive assumption on the admissible safety specification to simplify sparsity analysis.
Concretely, the specifications of \cite{newton2021exploiting} no longer allow for arbitrary quadratic constraints involving input-output couplings, such as the \(L_2\) gain \(\kappa\) of \(f\) posed as \(\norm{f(x)}_2 \leq \kappa \norm{x}_2\).
We show that one does not need to make this restriction and develop our approach without assuming any restrictions on the original safety specifications of DeepSDP.
Second, the authors of ~\cite{newton2021exploiting} assume only the use of \emph{sector-bounded} abstractions for the activation functions and do not consider how other abstractions may be introduced without altering the sparsity pattern.
We show how constraints in the form of \emph{adjacent-layer abstractions} may be freely added without affecting our decomposition method.
Notably, the authors of~\cite{newton2021exploiting} only conjecture chordal sparsity in DeepSDP, while this paper formally proves it.

In this work, we first present a decomposition that is equivalently expressive to DeepSDP, in particular allowing for arbitrary quadratic input-output specifications --- a distinguishing advantage of SDP-based techniques.
Then, we show how further analysis yields a second level of chordal decomposition that leads to a significant performance gain, still without any restrictions on the safety specification.
Furthermore, we show how additional abstractions of a certain form may be introduced without altering the chordal decomposition structure.
Improving the scalability of DeepSDP is relevant for other SDP-based methods, which continue to use many ideas pioneered by DeepSDP.
Our \textbf{contributions} are as follows:

\begin{itemize}
    % {\color{blue}
    % \item We show that DeepSDP admits a chordally sparse decomposition, allowing us to formulate an equivalent semidefinite program that we call Chordal-DeepSDP.
    % The primary benefit of this approach is scalability, as the main computational bottleneck of DeepSDP --- a large LMI --- is now split into an equivalent collection of smaller LMIs.
    % Moreover, DeepSDP and Chordal-DeepSDP are equivalent in that both compute the same optimal value, implying zero accuracy loss.
    % }

    \item We propose the Chordal-DeepSDP method for SDP-based verification of neural networks with general activation functions.
    We prove that all instances of DeepSDP admit a chordally sparse decomposition regardless of safety constraints.
    The primary benefit is scalability, as the main computational bottleneck of DeepSDP, a large LMI, may be split into an equivalent collection of smaller LMIs with zero accuracy loss.

    \item
    We present techniques for bolstering the accuracy and scalability of Chordal-DeepSDP.
    We show how additional constraints may be included, particularly \emph{adjacent-layer abstractions},
    without changing the chordal decomposition structure.
    Moreover, we exploit a second level of chordal decomposition to derive yet another efficient SDP that we call Chordal-DeepSDP-2.
    
    % Moreover, we observe that a second level of chordal decomposition is possible, allowing us to derive a further decomposed semidefinite program that we call Chordal-DeepSDP-2.
    
    \item 
    We provide numerical experiments to demonstrate the scalability of Chordal-DeepSDP and Chordal-DeepSDP-2 on real networks.
    Using learned cart-pole dynamics, we show that our chordal decompositions significantly out-scale DeepSDP in performance without losing accuracy.
    Our implementation is open-sourced at: \url{https://github.com/AntonXue/nn-sdp}.
    
\end{itemize}

\subsection{Related Work}
% https://www.aaai.org/AAAI22Papers/AAAI-4268.LanJ.pdf

% check also https://scholar.google.com/citations?hl=de&user=xdWZkEEAAAAJ&view_op=list_works&sortby=pubdate

The safety verification of neural networks has found broad interest over the past years, and there have been various efforts to address the problem~\cite{bak2021second,liu2019algorithms}.
The first breakthrough was with Reluplex~\cite{katz2017reluplex}, a specialized satisfiability modulo theories (SMT) based verification method that interleaves the simplex procedure with boolean satisfiability solving.
While Reluplex only applies to feedforward neural networks with ReLU activation functions, the technique was extended to fully connected and convolutional neural networks with arbitrary piecewise-linear activation functions with Marabou~\cite{katz2019marabou}.
Other works in SMT-based verification include~\cite{song2021qnnverifier,sena2021verifying}.
Related to SMT-solving has been the use of mixed-integer programming such as in~\cite{lomuscio2017approach,tjeng2017evaluating}.
These methods give accurate results for various types of neural networks and are, in fact, complete for ReLU activation functions, but they are fundamentally hard-to-solve combinatorial problems.

Another way of approaching safety verification is by reformulating it as a reachability problem~\cite{ivanov2019verisig,tran2020nnv,xiang2018output}, which has been of particular interest for the safety verification of closed-loop dynamical systems~\cite{everett2021neural}.
While heuristics for reachability problems over dynamical systems exist, these methods are typically computationally expensive.
Safety verification methods based on abstract interpretations~\cite{gehr2018ai2,tran2020nnv,muller2022prima} work by soundly over-approximating the input set into a convenient abstract domain, e.g., polytopes or zonotopes, and then propagating the abstracted representation through the network.
Then, properties verified on the abstract output domain imply that the same properties hold on the original system.
These approaches generally offer good scalability, but one must carefully choose appropriate abstract domains to preserve accuracy.

Broadly related to this work is the idea to formulate neural network verification as a convex optimization problem.
For scalability, linear approximations have been presented in  \cite{chen2021deepsplit,wong2018provable} where particularly \cite{chen2021deepsplit} is amendable to parallelization via ADMM.
% ~\cite{boyd2011distributed}.
Techniques for interval analysis like \(\beta\)-CROWN~\cite{wang2021beta} are also GPU-parallelizable.
Further related are convex optimization-based techniques~\cite{dvijotham2018dual}, which commonly use convex over-approximations of the activation functions to formulate a convex verification problem.

In summary, we study the scalability of semidefinite programming methods with a focus on DeepSDP~\cite{fazlyab2020safety,fazlyab2021introduction}.
The authors in~\cite{raghunathan2018semidefinite} present an early work in this direction, encoding ReLU activations via semidefinite constraints.
This was improved upon in DeepSDP~\cite{fazlyab2020safety}, which can handle arbitrary activations --- so long as they satisfy a quadratic \emph{sector-bounded} condition.
The work of \cite{newton2021exploiting} recognized that certain restricted parameterizations of DeepSDP admit sparsity structures amenable to chordal decomposition, but such formulations tend to be conservative and limited in their expressiveness.
The notion of chordal sparsity has proven helpful in decomposing large-scale optimization problems and is extensively studied \cite{agler1988positive,vandenberghe2015chordal,zheng2019chordal} with application in \cite{chen2020chordal,ihlenfeld2022faster,mason2014chordal}.
Recent works on SDP-based neural network verification have examined finding tighter abstractions, especially for networks with ReLU activations~\cite{batten2021efficient,lan2021tight}.
Also related are works based on polynomial optimization~\cite{brown2022unified,chen2020semialgebraic,newton2021neural,newton2022sparse}, which similarly use semidefinite programming as the underlying computational technique.
Many of our sparsity analysis techniques are based on the work of~\cite{xue2022chordal}, which studies chordal decompositions of LipSDP~\cite{fazlyab2019efficient}.

\section{Background and Problem Formulation}
\label{sec:background}

We consider feedforward neural networks \(f:\mbb{R}^{n_1}\to \mbb{R}^m\) with \(K \ge 2\) layers, i.e., \(K - 1\) hidden layers. For an input \(x_1 \in \mbb{R}^{n_1}\),  the output of the neural network is recursively computed for \(k = 1, \ldots, K-1\) as
\begin{align*}
    f(x_1) \coloneqq W_K x_K + b_K,
    \quad x_{k+1} \coloneqq \phi (W_k x_k + b_k),
\end{align*}
where \(W_k\) and \(b_k\) are the weight matrices and bias vectors of the \(k\)th layer that are of appropriate dimensions.
The dimensions of \(x_1, \ldots, x_K\) are \(n_1, \ldots, n_K > 0\).
The function \(\phi(u)\coloneqq \text{vcat} (\varphi(u_1), \varphi(u_2) \hdots) \in \mbb{R}^n\) is the activation function \(\varphi : \mbb{R} \to \mbb{R}\) applied element-wise to the input \(u = \text{vcat}(u_1, u_2 \hdots) \in \mbb{R}^n\).
Throughout the paper, we assume the same activation function is used for all neurons.
Let \(N \coloneqq n_1 + \cdots + n_K\) and also define the vectors
\begin{align*}
    \mbf{x} \coloneqq \text{vcat}(x_1, \hdots, x_K) \in \mbb{R}^{N},
    \enskip \mbf{z} \coloneqq \text{vcat}(\mbf{x}, 1) \in \mbb{R}^{N+1},
\end{align*}
with projections \(E_a \mbf{z} = 1\), \(E_k \mbf{z} = x_k\) for \(k = 1, \ldots, K\).

%Given an input set $X\subseteq \mbb{R}^{n_1}$ and a safe output set $Y\subseteq \mbb{R}^m$, the safety verification problem that we address in this paper is now to check if $f(X)\subseteq Y$.

\subsection{Neural Network Verification with DeepSDP}

The authors of \cite{fazlyab2020safety} present the DeepSDP framework for verifying whether a network \(f\) satisfies a specification \(\mcal{S}\).
The main idea is to use \emph{quadratic constraints} (QCs) to abstract the input set \(\mcal{X}\), the activation function \(\phi\), and the safety specification \(\mcal{S}\).
We then take the input QC~\eqref{eq:input-qc}, the activation QC~\eqref{eq:deepsdp-Q}, and safety QC~\eqref{eq:safety} to set up the safety verification problem as a large \emph{semidefinite program} that we call DeepSDP~\eqref{eq:deepsdp} whose satisfiability implies that the network is safe.

% The main idea is to use \emph{quadratic constraints} (QCs) to abstract the input set \(\mcal{X}\), the activation functions \(\phi\), and the safety specification \(\mcal{S}\).
% We then use these QCs to set up the safety verification problem as a large \emph{semidefinite program} (SDP) whose satisfiability implies that the network is safe.  

\subsubsection{Abstracting the Input Set via QCs}
To abstract the input set \(\mcal{X}\) into a QC, let \(P(\gamma_{\mrm{in}})\in\mbb{S}^{n_1+1}\) be a symmetric indefinite matrices parametrized by a vector \(\gamma_{\mrm{in}}\in \Gamma_{\mrm{in}}\) where \(\Gamma_{\mrm{in}}\) is the permissible set.
Each such matrix \(P(\gamma_{\mrm{in}})\) now has to satisfy the following QC
\begin{align}\label{eq:input-qc}
    \mbf{z}^\top Z_{\mrm{in}} \mbf{z} \geq 0,
    \quad
    Z_{\mrm{in}}
    \coloneqq
    \begin{bmatrix} E_1 \\ E_a \end{bmatrix}^\top
    P (\gamma_{\mrm{in}})
    \begin{bmatrix} E_1 \\ E_a \end{bmatrix}
    \in \mbb{S}^{N + 1}
\end{align}
where recall that \(E_1\mbf{z} = x_1\) and \(E_a \mbf{z} = 1\).
That is, any \(\mbf{z}\) for which \(x_1 \in \mcal{X}\) must satisfy~\eqref{eq:input-qc} for all \(\gamma_{\mrm{in}} \in \Gamma_{\mrm{in}}\), provided \(P\) is appropriately chosen.
The vector \(\gamma_{\mrm{in}}\) appears linearly in \(P(\gamma_{\mrm{in}})\) and will be a decision variable in DeepSDP.
The dimension of \(\gamma_{\mrm{in}}\) depends on the specific choice of \(P(\gamma_{\mrm{in}})\) (see \cite[Section 3.A]{fazlyab2020safety}).
% : for instance, the polytope \(\mcal{X} \coloneqq\{x_1\in\mbb{R}^n : Hx\le h\}\)  can be abstracted into the QC \eqref{eq:input-qc} with
%     \begin{align*}
%         P(\gamma_{\mrm{in}})\coloneqq\begin{bmatrix}
%             H^\top\Lambda H & -H^\top\Lambda h\\
%             -h^\top\Lambda H & h^\top\Lambda h
%         \end{bmatrix}
%     \end{align*}    
%     where \(\Lambda\) is a symmetric and nonnegative matrix whose entries are those of \(\gamma_{\mrm{in}}\).
%     As the entries of \(\Lambda\) are nonnegative, in this case \(\Gamma_{\mrm{in}}\) is the set of nonnegative vectors. 

\subsubsection{Abstracting the Activation Functions via QCs} To abstract the activation function $\phi:\mbb{R}^{n}\to\mbb{R}^{n}$, let \(Q(\gamma_{\mrm{ac}})\in\mbb{S}^{2n+1}\) be a set of symmetric indefinite matrices linearly parametrized by a vector \(\gamma_{\mrm{ac}}\in \Gamma_{\mrm{ac}}\) where $\Gamma_{\mrm{ac}}$ is the permissible set.
Each such matrix $Q(\gamma_{\mrm{ac}})$ must now satisfy the following QC  
\begin{align}\label{eq:activation}
    \begin{bmatrix}
    u \\
    \phi(u) \\
    1
    \end{bmatrix}^\top
    Q(\gamma_{\mrm{ac}})
    \begin{bmatrix}
    u \\
    \phi(u) \\
    1
    \end{bmatrix}
    \geq 0 \text{ for all } u \in U
\end{align}
where \(U \subseteq \mbb{R}^{n}\) is some input set of \(\phi\).
This is a valid abstraction of \(\phi\) in that any \((u, \phi(u))\) input-output pair will satisfy~\eqref{eq:activation} for all \(\gamma_{\mrm{ac}} \in \Gamma_{\mrm{ac}}\), provided \(Q\) is appropriately chosen.
The dimension of \(\gamma_{\mrm{ac}}\) depends on the particular activation function and on the specific abstraction that is chosen.
As an important example, it is shown in \cite[Proposition 2]{fazlyab2020safety} that many activations \(\varphi\) are \([a, b]\)-\emph{sector-bounded}
\footnote{
    A function \(\varphi\) is \([a,b]\)-sector-bounded if \(a\le \frac{\varphi(u_1)-\varphi(u_2)}{u_1-u_2}\le b\) for all \(u_1,u_2\in\mbb{R}\).
    For instance, the ReLU, sigmoid, and \(\tanh\) activations are all \([0, 1]\)-sector-bounded.
} 
for fixed scalars \(a \leq b\).
In such cases, the stacked activation \(\phi(u) = \text{vcat}(\varphi(u_1), \varphi(u_2), \ldots)\) admits the abstraction
\begin{align}\label{eq:T-sector}
\begin{split}
    Q_{\mrm{sec}}
    &\coloneqq \begin{bmatrix}
        -2ab T & (a + b)T & q_{13} \\
        (a + b)T & -2T & q_{23} \\
        q_{13}^\top  & q_{23} ^\top & q_{33}
    \end{bmatrix} \in \mbb{S}^{2n + 1},
\end{split}
\end{align}
with \(T = \sum_{i=1}^{n} \lambda_{ii} e_i e_i ^\top\), where \(e_i \in \mbb{R}^n\) is the \(i\)th basis vector and \(\lambda_i\) are nonnegative multipliers
\footnote{
The original work of DeepSDP~\cite{fazlyab2020safety} does not enforce a diagonal \(T\), but a counterexample in \cite{pauli2021training} shows that a diagonal \(T\) is the only valid one for sector-bounded abstractions.
% However, a counterexample in \cite{pauli2021training} shows that, under the original sector-bounded abstraction method, the diagonal case is valid while the general one is not.
}
.
In this case, one construction given in~\cite{fazlyab2020safety} takes \(Q(\gamma_{\mrm{ac}}) = Q_{\mrm{sec}}\) with \(q_{13}, q_{23}, q_{33}, \lambda\) linearly parametrized by \(\gamma_{\mrm{ac}} \geq 0\), meaning in this case \(\Gamma_{\mrm{ac}}\) is the set of nonnegative vectors.

Notably, the general activation QC of~\eqref{eq:activation} can be expressed with the entire multi-layered network and not just for a single layer.
By defining \(b \coloneqq \mrm{vcat}(b_1, \ldots, b_{K-1})\),
% {\small
\begin{align*}
% \label{eq:AbB}
    A \coloneqq \begin{bmatrix}
        W_1 & \cdots & 0 & 0 \\
        \vdots & \ddots & \vdots & \vdots\\
        0 & \cdots & W_{K-1} & 0
        \end{bmatrix},
    \
    B \coloneqq \begin{bmatrix}
        0 & I_{n_2} & \cdots & 0 \\
        \vdots & \vdots & \ddots & \vdots \\
        0 & 0 & \cdots & I_{n_K}
        \end{bmatrix},
\end{align*}
% }
we may write \(B \mbf{x} = \phi(A \mbf{x} + b) \in \mbb{R}^{n_2 + \cdots + n_K}\), such that \eqref{eq:activation} then becomes \(\mbf{z}^\top Z_{\mrm{ac}} \mbf{z} \geq 0\) with
\begin{align}\label{eq:deepsdp-Q}
    Z_{\mrm{ac}} \coloneqq
    \begin{bmatrix} \\ \star \\ \\ \end{bmatrix}^\top
    Q(\gamma_{\mrm{ac}})
    \begin{bmatrix} A & b \\ B & 0 \\ 0 & 1 \end{bmatrix}
    \in \mbb{S}^{N + 1},
\end{align}
where \([\star]\) denotes identical terms as on the RHS.
We remark that the DeepSDP framework is not restricted to abstractions of the form \(Q_{\mrm{sec}}\).
However, we will first present our main results under the assumptions of~\eqref{eq:activation} with \(Q(\gamma_{\mrm{ac}}) = Q_{\mrm{sec}}\) in Section \ref{sec:main-results} before showing in Section~\ref{sec:extensions} how other abstractions may be incorporated while preserving sparsity.

\subsubsection{Abstracting the Safety Specification via QCs} To abstract the safety specification \(\mcal{S}\) into a QC, let \(S\in\mbb{S}^{n_1+m+1}\) be a symmetric matrix.
We assume that \(S\) encodes safety by the inequality \(\mbf{z}^\top Z_{\mrm{out}} \mbf{z} \geq 0\) where
\begin{align}\label{eq:safety}
    Z_{\mrm{out}}
    \coloneqq
    \begin{bmatrix} \\ \star \\ \\ \end{bmatrix}^\top
    S
    \begin{bmatrix} I & 0 & 0 \\ 0 & W_K & b_K \\ 0 & 0 & 1 \end{bmatrix}
    \begin{bmatrix} E_1 \\ E_K \\ E_a \end{bmatrix}
    \in \mbb{S}^{N + 1},
\end{align}
which captures a quadratic relation between the input \(x_1\) and output \(y = f(x_1)\).
This expresses a safety condition in that for the input-output pair \((x_1, y)\), the corresponding states \(\mbf{z}\) should satisfy~\eqref{eq:safety}.
For instance, to bound the local \(L_2\) gain via \(\mcal{S} = \{(x_1, y) : \norm{y} \leq \kappa \norm{x_1}\}\), take \(S = \text{blockdiag}(- \kappa^2 I, I, 0)\).

\subsubsection{DeepSDP} Finally, we combine the input, activation, and output QCs and define DeepSDP as the following SDP:
\begin{align}\label{eq:deepsdp}
\begin{split}
    \text{find} &\enskip \gamma \coloneqq (\gamma_{\mrm{in}}, \gamma_{\mrm{ac}}) \in \Gamma_{\mrm{in}}\times \Gamma_{\mrm{ac}} \\
    \text{subject to} &\enskip Z(\gamma) \coloneqq Z_{\mrm{in}} + Z_{\mrm{ac}} + Z_{\mrm{out}} \preceq 0
\end{split}
% \tag{DeepSDP}
\end{align}
We recall the formal safety guarantees of DeepSDP next.

% \vspace{1em}
\begin{lemma}[\cite{fazlyab2020safety}, Theorem 2]
\label{lem:deepsdp-safety-cond}
    Suppose DeepSDP~\eqref{eq:deepsdp} is feasible, then \(f\) satisfies the specification \(\mcal{S}\).
    % \begin{align*}
    %     \begin{bmatrix} x_1 \\ f(x_1) \\ 1 \end{bmatrix}^\top
    %     S
    %     \begin{bmatrix} x_1 \\ f(x_1) \\ 1 \end{bmatrix} \leq 0
    %     \enskip \text{if \(x_1 \in \mcal{X}\)}.
    % \end{align*}
\end{lemma}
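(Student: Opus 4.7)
The plan is to leverage the S-procedure-style argument typical of these quadratic constraint-based verification frameworks: a feasible certificate $\gamma$ for the SDP yields a pointwise inequality along every true trajectory of the network, and the chosen abstractions then squeeze the safety residual into the desired half-space.

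First I would fix an arbitrary admissible input $x_1 \in \mcal{X}$ and propagate it forward through the network to obtain $x_{k+1} = \phi(W_k x_k + b_k)$ for $k = 1, \ldots, K-1$, then assemble $\mbf{x} = \mrm{vcat}(x_1, \ldots, x_K)$ and $\mbf{z} = \mrm{vcat}(\mbf{x}, 1)$. Feasibility of DeepSDP~\eqref{eq:deepsdp} supplies $\gamma = (\gamma_{\mrm{in}}, \gamma_{\mrm{ac}})$ with $Z(\gamma) \preceq 0$; evaluating the quadratic form at this specific $\mbf{z}$ yields
\begin{align*}
    \mbf{z}^\top Z_{\mrm{in}} \mbf{z} + \mbf{z}^\top Z_{\mrm{ac}} \mbf{z} + \mbf{z}^\top Z_{\mrm{out}} \mbf{z} \;=\; \mbf{z}^\top Z(\gamma) \mbf{z} \;\leq\; 0.
\end{align*}

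Next I would discharge the first two terms using the abstraction QCs. Since $E_1 \mbf{z} = x_1 \in \mcal{X}$ and $E_a \mbf{z} = 1$, the input QC \eqref{eq:input_abst} gives $\mbf{z}^\top Z_{\mrm{in}} \mbf{z} \geq 0$. For the activation term, the key observation is that by construction of $A$, $B$, and $b$, the stacked identity $B \mbf{x} = \phi(A \mbf{x} + b)$ holds exactly along the true trajectory; substituting $u = A \mbf{x} + b$ into \eqref{eq:activation} with $Q(\gamma_{\mrm{ac}}) = Q_{\mrm{sec}}$ and regrouping via \eqref{eq:deepsdp-Q} therefore yields $\mbf{z}^\top Z_{\mrm{ac}} \mbf{z} \geq 0$. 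Combining with the inequality above leaves
\begin{align*}
    \mbf{z}^\top Z_{\mrm{out}} \mbf{z} \;\leq\; -\mbf{z}^\top Z_{\mrm{in}} \mbf{z} - \mbf{z}^\top Z_{\mrm{ac}} \mbf{z} \;\leq\; 0.
\end{align*}

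Finally, I would interpret this residual: plugging $E_K \mbf{x} = x_K$ and $W_K x_K + b_K = f(x_1)$ into the definition \eqref{eq:safety} shows that $\mbf{z}^\top Z_{\mrm{out}} \mbf{z}$ is precisely the quadratic form $(x_1, f(x_1), 1)^\top S (x_1, f(x_1), 1)$ that encodes membership in $\mcal{S}$, so the inequality certifies $(x_1, f(x_1)) \in \mcal{S}$. Since $x_1 \in \mcal{X}$ was arbitrary, $f$ satisfies $\mcal{S}$ globally on its admissible domain.

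The only non-routine step is verifying that the activation QC genuinely evaluates to a nonnegative value on the actual trajectory, which reduces to checking that the sector-bounded abstraction $Q_{\mrm{sec}}$ satisfies \eqref{eq:activation} for every admissible $\gamma_{\mrm{ac}}$; this is inherited from \cite[Proposition 2]{fazlyab2020safety} together with the observation that restricting to the $\beta$-banded index set $\mcal{I}_\beta$ only removes nonnegative slack terms and therefore preserves the inequality.
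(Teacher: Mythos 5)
Your argument is correct and is essentially the proof of \cite[Theorem 2]{fazlyab2020safety} that this paper imports without reproving: evaluate the LMI certificate on the true trajectory vector \(\mbf{z}\), discharge the input and activation terms via their respective QCs (the activation QC holding on the trajectory because \(B\mbf{x} = \phi(A\mbf{x}+b)\) exactly, and the \(\beta\)-banded restriction being a special case of the full sector QC with the omitted \(\lambda_{ij}\) set to zero), and read off the sign of the safety residual. The one caveat is a sign convention: your conclusion \(\mbf{z}^\top Z_{\mrm{out}}\mbf{z}\leq 0\) matches the original DeepSDP convention and this paper's own \(L_2\)-gain example \(S = \mathrm{diag}(-\kappa I, I, 0)\), whereas the surrounding prose here states the safety encoding as \(\mbf{z}^\top Z_{\mrm{out}}\mbf{z}\geq 0\) --- an apparent typo in the paper, not a gap in your proof.
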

This result ensures that the feasibility of DeepSDP implies that the desired safety condition \(S\) holds.
Moreover, one can also derive a \emph{reachability} variant of DeepSDP by parametrizing some components of \(S\), e.g., the \(L_2\) gain \(\kappa\) in the above example to then optimize for the tightest fit.
This notion of reachability is demonstrated in \cite[Section 5]{fazlyab2020safety} and also in Section~\ref{sec:experiments}.
A computational bottleneck of DeepSDP, however, is the large LMI constraint.
In particular, the dimension of \(Z(\gamma)\) grows with the size of the neural network \(f\), and so solving DeepSDP quickly becomes intractable.

%%%%%%%%%%%%%%%%%%%%%%%%%%%%%%%%%%%%%%%%%%%%%%%%%%%%%%%%%%%%%%%%

\subsection{Chordal Sparsity}

The notion of chordal sparsity connects chordal graph theory and sparse matrix decomposition~\cite{griewank1984existence,vandenberghe2015chordal} and can be used to efficiently solve large-scale semidefinite programs.
In the context of this paper, if \(Z(\gamma)\) is chordally sparse, then the LMI constraint \(Z(\gamma) \preceq 0\) within DeepSDP can be split into an equivalent set of smaller matrix constraints.

\subsubsection{Chordal Graphs and Sparse Matrices}

Let \(\mcal{G}(\mcal{V}, \mcal{E})\) be an undirected graph with vertices \(\mcal{V} = [n] \coloneqq \{1, \ldots, n\}\) and edges \(\mcal{E} \subseteq \mcal{V} \times \mcal{V}\).
Since \(\mcal{G}\) is undirected, we assume that \(\mcal{E}\) is symmetric, i.e. \((i,j) \in \mcal{E}\) implies \((j,i) \in \mcal{E}\).
A subset of vertices \(\mcal{C} \subseteq \mcal{V}\) forms a \emph{clique} if \(u,v \in \mcal{C}\) implies that \((u,v) \in \mcal{E}\), and let \(\mcal{C}(i)\) be the \(i\)th vertex of \(\mcal{C}\) under the natural ordering.
If there is no other clique that strictly contains \(\mcal{C}\), then \(\mcal{C}\) is a \emph{maximal clique}.
A \emph{cycle} of length \(k\) is a sequence of vertices \(v_1, \ldots, v_k, v_1 \in \mcal{V}\) with \((v_k, v_1) \in \mcal{E}\) and adjacent connections \((v_i, v_{i+1}) \in \mcal{E}\).
A \emph{chord} is an edge that connects two nonadjacent vertices in a cycle.
We say that a graph is \emph{chordal} if every cycle of length \(\geq 4\) has at least one chord~\cite[Chapter 2]{vandenberghe2015chordal}.

Dually, an edge set \(\mcal{E}\) may also describe the sparsity of a symmetric matrix.
Given \(\mcal{G}(\mcal{V}, \mcal{E})\), the set of symmetric matrices of size \(n\) with sparsity pattern \(\mcal{E}\) is defined as
\begin{align}
\label{eq:SE}
    \mbb{S}^n (\mcal{E})
    \coloneqq
    \braces{ X \in \mbb{S}^n
        : X_{ij} = X_{ji} = 0
        \enskip \text{if \((i,j) \not\in \mcal{E}\)}}.
\end{align}
If \(X \in \mbb{S}^n (\mcal{E})\), we say that \(X\) has sparsity pattern \(\mcal{E}\).
We say that the entry \(X_{ij}\) is \emph{sparse} if \((i,j) \not\in \mcal{E}\), and that it is \emph{dense} otherwise.
Moreover, if \(\mcal{G}(\mcal{V}, \mcal{E})\) is chordal, then we say that \(X\) has \emph{chordal sparsity} or is \emph{chordally sparse}.

\subsubsection{Chordal Decomposition of Sparse Matrices} For a chordally sparse matrix \(X \in \mbb{S}^{n} (\mcal{E})\), one can deduce whether \(X \succeq 0\) by analyzing the maximal cliques of \(\mcal{G}(\mcal{V}, \mcal{E})\).
To do this, given a clique \(\mcal{C} \subseteq \mcal{V}\), define its projection matrix \(E_{\mcal{C}} \in \mbb{R}^{\abs{\mcal{C}} \times n}\) such that \((E_{\mcal{C}})_{ij} = 1\) iff \(\mcal{C}(i) = j\), and \((E_{\mcal{C}})_{ij} = 0\) otherwise.
% This way for \(u \in \mbb{R}^n\) the vector \(E_{\mcal{C}} u \in \mbb{R}^{\abs{\mcal{C}}}\) contains \(u_i\) iff \(i \in \mcal{C}\).
An exact condition for \(X \succeq 0\) is then stated as follows.

% \vspace{1em}
\begin{lemma}[\cite{zheng2019chordal} Theorem 2.10]
\label{lem:chordal-psd}
    Let \(\mcal{G}(\mcal{V}, \mcal{E})\) be a chordal and \(\mcal{C}_1, \ldots, \mcal{C}_p\) its maximal cliques.
    Then \(X \in \mbb{S}^n (\mcal{E})\) and \(X \succeq 0\) iff there are \(X_k \in \mbb{S}^{\abs{\mcal{C}_k}}\) such that each \(X_k \succeq 0\) and
    \(X = \sum_{k = 1}^{p} E_{\mcal{C}_k}^\top X_k E_{\mcal{C}_k}\).
    % \begin{align}
    % \label{eq:X-chordal-decomp}
    %     X = \sum_{k = 1}^{p} E_{\mcal{C}_k}^\top X_k E_{\mcal{C}_k}.
    % \end{align}
\end{lemma}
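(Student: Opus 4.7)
The plan is to prove the two directions separately. The reverse implication is essentially a routine check, while the forward implication is the nontrivial content (this is the classical Agler--Grone--Johnson--Pierce--Wolkowicz decomposition, and so after planning I would just cite~\cite{agler1988positive} or~\cite{vandenberghe2015chordal} for a self-contained reference).

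For the $(\Leftarrow)$ direction, suppose we have symmetric matrices $X_k \succeq 0$ with $X = \sum_{k=1}^{p} E_{\mcal{C}_k}^\top X_k E_{\mcal{C}_k}$. Each summand is a congruence transformation of a positive semidefinite matrix by $E_{\mcal{C}_k} \in \mbb{R}^{\abs{\mcal{C}_k} \times n}$, hence itself PSD, and a finite sum of PSD matrices is PSD, giving $X \succeq 0$. For the sparsity pattern, the $(i,j)$ entry of $E_{\mcal{C}_k}^\top X_k E_{\mcal{C}_k}$ vanishes unless both $i, j \in \mcal{C}_k$; but in that case $(i,j) \in \mcal{E}$ because $\mcal{C}_k$ is a clique. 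Hence $X \in \mbb{S}^n(\mcal{E})$.

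For the $(\Rightarrow)$ direction, the plan is to induct on $n$ using the perfect elimination ordering (PEO) guaranteed by chordality. Reindex vertices so that vertex $n$ is simplicial, meaning its neighborhood $N(n)$ is a clique, and let $\mcal{C}_\star \coloneqq \{n\} \cup N(n)$, which is a clique of $\mcal{G}$ and contained in some maximal clique (WLOG $\mcal{C}_1$). Partition $X$ in block form with the last row and column isolated; chordal sparsity forces all nonzeros of that row/column to lie in indices from $\mcal{C}_\star$. One then peels off a single PSD ``clique block'' $Y_1 \in \mbb{S}^{\abs{\mcal{C}_\star}}$ supported on $\mcal{C}_\star$ and chosen via a Schur-complement construction so that $X - E_{\mcal{C}_\star}^\top Y_1 E_{\mcal{C}_\star}$ is (i) still PSD, (ii) has a zero last row and column, and (iii) inherits the sparsity pattern of the induced subgraph $\mcal{G}\setminus\{n\}$. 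Because deleting a vertex from a chordal graph leaves a chordal graph, the inductive hypothesis applies to the residual $(n-1)\times(n-1)$ block and produces PSD summands on maximal cliques of $\mcal{G}\setminus\{n\}$. Finally, I would merge $Y_1$ back in (embedding it into the block for $\mcal{C}_1$, padded with zeros on $\mcal{C}_1\setminus\mcal{C}_\star$) and coalesce the inductively produced summands with the maximal cliques of the original $\mcal{G}$ — every maximal clique of $\mcal{G}\setminus\{n\}$ is either a maximal clique of $\mcal{G}$ or contained in one.

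The main obstacle is the Schur-complement construction in the inductive step: one must explicitly exhibit a PSD $Y_1$ supported on $\mcal{C}_\star$ that matches the last row/column of $X$ and whose subtraction preserves positive semidefiniteness of the residual. If $X_{nn} > 0$, the natural choice $Y_1 = X_{\mcal{C}_\star,\mcal{C}_\star} - \text{(Schur residual on }N(n)\text{)}$ succeeds because the Schur complement of a PSD matrix is PSD and the residual in the $n$th coordinate vanishes by construction. The degenerate case $X_{nn} = 0$ forces the entire last row/column of $X$ to vanish by positive semidefiniteness, so the problem trivially reduces to the $(n-1)$-dimensional case with $Y_1 = 0$. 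Handling both cases uniformly and verifying that the sparsity of the residual matches $\mcal{G}\setminus\{n\}$ is where the argument requires the most care; everything else is bookkeeping over the clique structure.
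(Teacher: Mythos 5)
The paper offers no proof of this lemma at all: it is imported as a black box, namely Theorem 2.10 of \cite{zheng2019chordal}, which is the classical Agler--Helton--McCullough--Rodman clique decomposition for chordally sparse PSD matrices. Your sketch is the standard self-contained proof of that classical result, and it is essentially correct. The \((\Leftarrow)\) direction is exactly the routine argument (each congruence \(E_{\mcal{C}_k}^\top X_k E_{\mcal{C}_k}\) is PSD and supported on the clique \(\mcal{C}_k\), so the sum is PSD with sparsity \(\mcal{E}\), granting the usual convention that all self-loops \((i,i)\) belong to \(\mcal{E}\)). Your \((\Rightarrow)\) direction via a perfect elimination ordering is the textbook route: writing \(X\) with the simplicial vertex \(n\) isolated as blocks \(A\), \(b\), \(c = X_{nn}\), chordality confines \(b\) to the clique \(N(n)\); when \(c>0\) the peeled block is the rank-one PSD matrix \(\tfrac{1}{c}\,v v^\top\) with \(v\) the stacked vector of \(b_{N(n)}\) and \(c\), the residual's leading block is the Schur complement \(A - b b^\top / c\), which is PSD and still carried by the induced (chordal) subgraph because \(b b^\top\) lives on \(N(n)\times N(n)\subseteq\mcal{E}\); and \(c=0\) forces \(b=0\) by positive semidefiniteness. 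The remaining bookkeeping you describe --- embedding \(Y_1\) into a maximal clique containing \(\mcal{C}_\star\), lifting summands on maximal cliques of \(\mcal{G}\setminus\{n\}\) into maximal cliques of \(\mcal{G}\), summing any that land on the same clique, and assigning \(X_k = 0\) to cliques that receive nothing --- is exactly what is needed to land on the stated form with precisely \(p\) terms. In short, where the paper buys the result with a citation, you supply a correct proof; since the result is standard, the citation is the economical choice here, but nothing in your argument is wrong.
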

% We call~\eqref{eq:X-chordal-decomp} a \emph{chordal decomposition} of \(X\).
We call this summation of \(X_k\) a \emph{chordal decomposition} of \(X\).
This is important because we may now solve a large LMI using an equivalent collection of smaller ones.
As a generous lower bound: solving \(X \succeq 0\) is typically \(\Omega(n^3)\) in time~\cite{lofberg2009pre}, meaning that solving a set of smaller \(X_k \succeq 0\) constraints is often desirable.

\section{Network Verification with Chordal-DeepSDP}
\label{sec:main-results}

In this section, we construct a chordally sparse variant of DeepSDP called Chordal-DeepSDP.
We assume here that activation functions are abstracted as sector-bounded functions with \(Q(\gamma_{\mrm{ac}}) = Q_{\mrm{sec}}\), and show in Section~\ref{sec:extensions} how other abstractions can be used while preserving the same sparsity.
The main technical challenge is to precisely characterize the sparsity \(\mcal{E}\) of \(Z(\gamma) \in \mbb{S}^{N+1}\),
and to gain intuition, we show the patterns for different network sizes in Figure~\ref{fig:Z-colors}.

\begin{figure}[ht]
\centering
\begin{minipage}{0.14\textwidth}
    \includegraphics[width=1.0\textwidth]{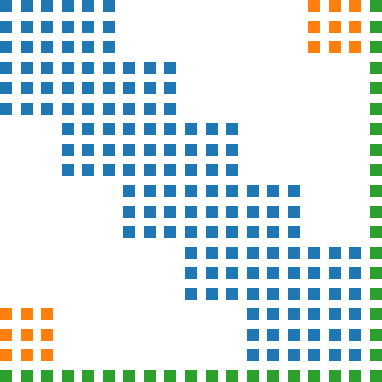}
\end{minipage}%
\quad
\begin{minipage}{0.14\textwidth}
    \includegraphics[width=1.0\textwidth]{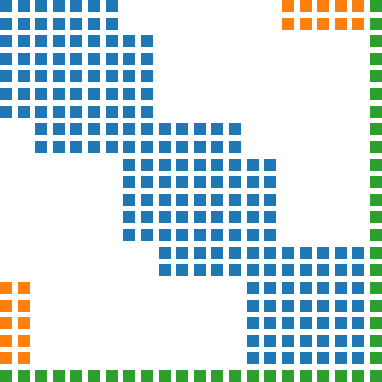}
\end{minipage}%
\quad
\begin{minipage}{0.14\textwidth}
    \includegraphics[width=1.0\textwidth]{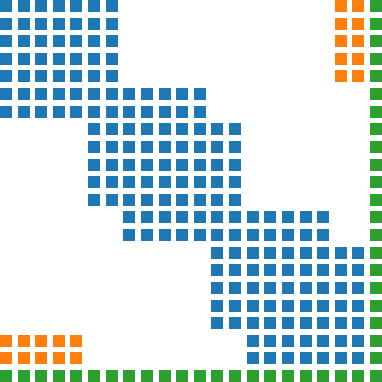}
\end{minipage}

\caption{
The sparsity of \(Z(\gamma) \in \mbb{S}^{N+1}\) for networks of depth \(K=6\) and sizes: left \((3,3,3,3,3,3)\), center \((2,5,2,5,2,5)\), and right \((5,2,5,2,5,2)\).
Each plot corresponds to a sparsity \(\mcal{E}\) where \((i,j) \in \mcal{E}\) iff \((i,j)\) is a colored square.
Each \(\mcal{E}\) is the disjoint union of: \(\mcal{E}_M\) (blue), \(\mcal{E}_a\) (green), and \(\mcal{E}_{1,K}\) (orange).
}
\label{fig:Z-colors}
\end{figure}

It is easy to conjecture that \(\mcal{E}\) is an overlapping block-diagonal pattern with arrow-structure~\cite[Section 8.2]{vandenberghe2015chordal}.
In addition, one can guess that such \(\mcal{E}\) partitions \(Z(\gamma)\) as a block matrix with \(K+1\) blocks, where the first \(K\) blocks each have size \(n_1, \ldots, n_K\) and the last block has size \(1\).
We formalize and prove this intuition: define a disjoint partition of the indices \([N]\) as
\begin{align*}
    [N] = \bigcup_{k=1}^{K} \mcal{V}_k,
    \quad
    \mcal{V}_k = \bigg\{v : 1 + \sum_{j=1}^{k-1} n_j \leq v \leq \sum_{j=1}^{k} n_j\bigg\},
\end{align*}
where \(\mcal{V}_k\) corresponds to the indices (vertices) of the \(k\)th block.
The sparsity \(\mcal{E}\) is then characterized as follows, where we use \(\mcal{A}^2 = \mcal{A} \times \mcal{A}\) to mean the Cartesian product.

\ifautomatica
\vspace{1em}
\fi
\begin{lemma}
\label{lem:Z-sparsity}
    It holds that \(Z(\gamma) \in \mbb{S}^{N+1} (\mcal{E})\), where the sparsity \(\mcal{E} \coloneqq \mcal{E}_M \cup \mcal{E}_a \cup \mcal{E}_{1,K}\) is a disjoint union with
    \begin{align*}
        \mcal{E}_M &\coloneqq \mcal{E}_{M,1} \cup \cdots \cup \mcal{E}_{M,K-1},
        \quad \mcal{E}_{M,k} \coloneqq (\mcal{V}_k \cup \mcal{V}_{k+1})^2, \\
        \mcal{E}_a &\coloneqq \{(i,j) : i = N + 1 \enskip\text{or}\enskip j = N+1\}, \\
        \mcal{E}_{1,K} &\coloneqq \{(i,j) : (i \in \mcal{V}_1, j \in \mcal{V}_{K}) \enskip\text{or}\enskip (i \in \mcal{V}_K, j \in \mcal{V}_1)\}.
    \end{align*}
\end{lemma}

However, a chordal decomposition of \(Z(\gamma)\) given \(\mcal{E}\) is not obvious.
This is because each \(\mcal{G}([N+1], \mcal{E})\) is in fact \emph{not} chordal: the ``wing tips'' at the lower-left and upper-right corners allows for a chord-less cycle of length \(\geq 4\) between vertices \(1\) and \(N+1\), akin to the ``wheel'' shape seen in \cite[Figure 3.1 (right)]{vandenberghe2015chordal}.

Crucially, chordal sparsity is tied to \emph{graphs} rather than to \emph{matrices}.
For instance, if there exists \(\mcal{F} \supseteq \mcal{E}\), then \(Z(\gamma) \in \mbb{S}^{N+1} (\mcal{F})\) as well.
Moreover, if \(\mcal{G}([N+1], \mcal{F})\) is chordal, then one may decompose \(Z(\gamma)\) with respect to the maximal cliques of \(\mcal{G}([N+1], \mcal{F})\) as in Lemma~\ref{lem:chordal-psd}.
Such \(\mcal{F}\) is known as a \emph{chordal extension} of \(\mcal{E}\), and the flexibility of permitting a matrix to have multiple valid sparsity patterns means that we may pick whichever sparsity is convenient (i.e., chordal) to apply chordal decomposition.

\begin{figure}[ht]
% \centering
    % \includegraphics[width=0.29\textwidth]{images/Zbeta0.png}\hspace{1cm}
    % \includegraphics[width=0.29\textwidth]{images/Zbeta2.png}\hspace{1cm}
    % \quad
    % \includegraphics[width=0.29\textwidth]{images/Zbeta4.png}

\centering
\begin{minipage}{0.14\textwidth}
    \includegraphics[width=1.0\textwidth]{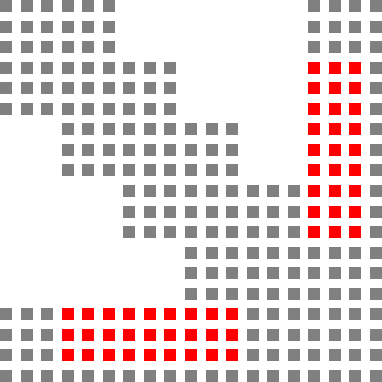}
\end{minipage}%
\quad
\begin{minipage}{0.14\textwidth}
    \includegraphics[width=1.0\textwidth]{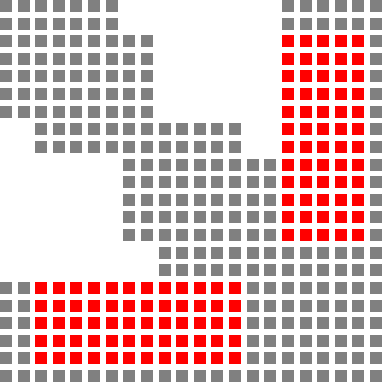}
\end{minipage}%
\quad
\begin{minipage}{0.14\textwidth}
    \includegraphics[width=1.0\textwidth]{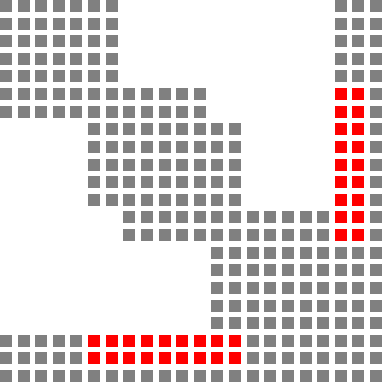}
\end{minipage}
\caption{
A chordal extension of each sparsity \(\mcal{E}\) in Figure~\ref{fig:Z-colors} into a new edge set \(\mcal{F}\), such that each graph \(\mcal{G}([N+1], \mcal{F})\) is now chordal.
Red entries denote the new edges.
}
\label{fig:Z-extensions}
\end{figure}

One such chordal extension for \(\mcal{E}\) is shown in Figure~\ref{fig:Z-colors}, where we fill in certain entries (denoted in red) in order to chordally extend each \(\mcal{E}\) into its corresponding \(\mcal{F}\).
This extension
% \footnote{
% {\color{blue}
%     This is not the unique way to chordally extend \(\mcal{E}\), but this particular choice simplifies our clique analysis.
% }
% }
in Figure~\ref{fig:Z-extensions} can be described by \(\mcal{F} \coloneqq \mcal{E} \cup \mcal{E}_K\) where
\(\mcal{E}_K \coloneqq \{(i,j) : i \in \mcal{V}_K \enskip\text{or}\enskip j \in \mcal{V}_K\}\).
% \begin{align*}
%     \mcal{E}_K &\coloneqq \{(i,j) : i \in \mcal{V}_K \enskip\text{or}\enskip j \in \mcal{V}_K\}
% \end{align*}
The sparsity \(\mcal{F}\) is central to our analysis of \(Z(\gamma)\) and chordal decomposition of DeepSDP.
We next describe the chordality and maximal cliques of \(\mcal{G}([N+1], \mcal{F})\).

\ifautomatica
\vspace{1em}
\fi
\begin{theorem}
\label{thm:Z-cliques}
    The graph \(\mcal{G}([N+1], \mcal{F})\) is chordal with \(p \coloneqq K-2\) maximal cliques, where the \(k\)th maximal clique is given by
    \(\mcal{C}_k \coloneqq \mcal{V}_k \cup \mcal{V}_{k+1} \cup \mcal{V}_{K} \cup \{N+1\}\).
    % \begin{align*}
    %     \mcal{C}_k \coloneqq \mcal{V}_k \cup \mcal{V}_{k+1} \cup \mcal{V}_{K} \cup \{N+1\}.
    % \end{align*}
\end{theorem}

\begin{figure}[ht]
% \centering
    % \includegraphics[width=0.29\textwidth]{images/Zbeta0.png}\hspace{1cm}
    % \includegraphics[width=0.29\textwidth]{images/Zbeta2.png}\hspace{1cm}
    % \quad
    % \includegraphics[width=0.29\textwidth]{images/Zbeta4.png}

\centering
\begin{minipage}{0.14\textwidth}
    \includegraphics[width=1.0\textwidth]{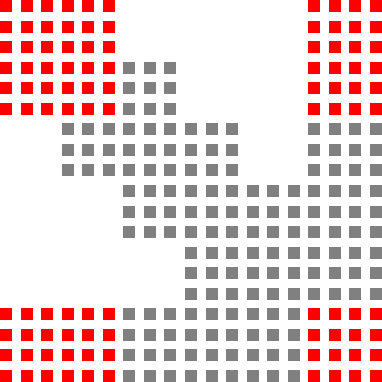}
\end{minipage}%
\quad
\begin{minipage}{0.14\textwidth}
    \includegraphics[width=1.0\textwidth]{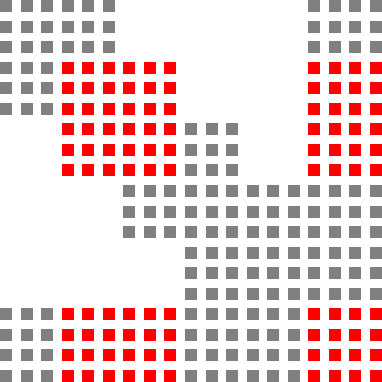}
\end{minipage}%
\quad
\begin{minipage}{0.14\textwidth}
    \includegraphics[width=1.0\textwidth]{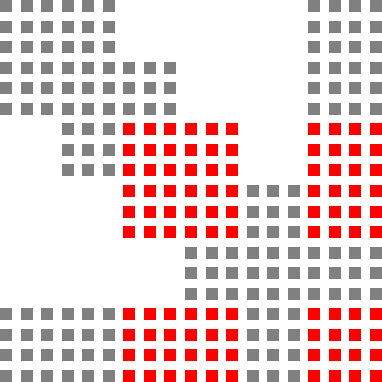}
\end{minipage}
\caption{
The maximal cliques \(\mcal{C}_1, \mcal{C}_2, \mcal{C}_3\) of Figure~\ref{fig:Z-extensions} (left), in red.
The final clique \(\mcal{C}_4\) is the entire lower-right square block.
}
\label{fig:Z-cliques}
\end{figure}

% We prove the above in Appendix~\ref{appendix:main-results}}, which establishes 

This result establishes that \(Z(\gamma) \in \mbb{S}^{N+1} (\mcal{F})\) and that \(\mcal{G}([N+1], \mcal{F})\) is chordal with maximal cliques \(\mcal{C}_1, \ldots, \mcal{C}_p\).
We show an example in Figure~\ref{fig:Z-cliques},
and prove this in the Appendix~\ref{appendix:main-results}.
Importantly, Theorem~\ref{thm:Z-cliques} gives the conditions to apply Lemma~\ref{lem:chordal-psd}, with which we form a chordally sparse variant of DeepSDP that we call Chordal-DeepSDP:
\begin{align}
\label{eq:chordal-deepsdp}
\begin{split}
    \text{find} &\enskip
        \gamma \coloneqq (\gamma_{\mrm{in}}, \gamma_{\mrm{ac}}) \in \Gamma_{\mrm{in}} \times \Gamma_{\mrm{ac}}, \\
    \text{subject to}
        &\enskip Z_1, \ldots, Z_p \preceq 0,
        \enskip
        Z(\gamma) = \sum_{k = 1}^{p} E_{\mcal{C}_k}^\top Z_k E_{\mcal{C}_k}
    % &\enskip
    %     Z(\gamma) = \sum_{k = 1}^{p} E_{\mcal{C}_k}^\top Z_k E_{\mcal{C}_k}
\end{split}
% \tag{Chordal-DeepSDP}
\end{align}
Moreover, by Lemma~\ref{lem:chordal-psd}, we have that DeepSDP and Chordal-DeepSDP are, in fact, \emph{equivalent problems}.

\ifautomatica
\vspace{1em}
\fi
\begin{theorem}
\label{thm:problems-equivalent}
    DeepSDP~\eqref{eq:deepsdp} and Chordal-DeepSDP~\eqref{eq:chordal-deepsdp} are equivalent problems: one is feasible iff the other is.
    That is, \(\gamma\) is a solution to \eqref{eq:deepsdp} iff \(\gamma\) and some \(Z_1, \ldots, Z_p\) is a solution to \eqref{eq:chordal-deepsdp}.
\end{theorem}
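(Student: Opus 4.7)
The plan is to reduce Theorem~\ref{thm:problems-equivalent} to a direct application of Lemma~\ref{lem:chordal-psd}, once the preconditions of that lemma have been established via Lemma~\ref{lem:Z-sparsity} and Theorem~\ref{thm:Z-cliques}. Three ingredients must be assembled: (i) that \(Z(\gamma)\) possesses the sparsity pattern \(\mcal{F}_\beta\), (ii) that the graph \(\mcal{G}([N+1], \mcal{F}_\beta)\) is chordal with maximal cliques \(\{\mcal{C}_1, \ldots, \mcal{C}_p\}\), and (iii) the standard PSD chordal decomposition statement.

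First I would observe that Lemma~\ref{lem:Z-sparsity} yields \(Z(\gamma) \in \mbb{S}^{N+1}(\mcal{E}_\beta)\), and since by construction \(\mcal{F}_\beta = \mcal{E}_\beta \cup \mcal{E}_K \supseteq \mcal{E}_\beta\), the monotonicity of the sparse matrix cone defined in \eqref{eq:SE} gives \(Z(\gamma) \in \mbb{S}^{N+1}(\mcal{F}_\beta)\) as well. Theorem~\ref{thm:Z-cliques} provides chordality and the explicit list of maximal cliques, meeting the hypotheses of Lemma~\ref{lem:chordal-psd}.

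Next I would address the sign convention, which is the one minor bookkeeping subtlety. Lemma~\ref{lem:chordal-psd} is stated for positive semidefinite matrices, whereas DeepSDP requires \(Z(\gamma) \preceq 0\). I would apply Lemma~\ref{lem:chordal-psd} to the matrix \(-Z(\gamma) \in \mbb{S}^{N+1}(\mcal{F}_\beta)\), which yields that \(-Z(\gamma) \succeq 0\) iff there exist \(X_k \in \mbb{S}^{\abs{\mcal{C}_k}}\) with \(X_k \succeq 0\) and \(-Z(\gamma) = \sum_{k=1}^p E_{\mcal{C}_k}^\top X_k E_{\mcal{C}_k}\). Substituting \(Z_k \coloneqq -X_k\) rewrites this as \(Z_k \preceq 0\) and \(Z(\gamma) = \sum_{k=1}^p E_{\mcal{C}_k}^\top Z_k E_{\mcal{C}_k}\), which is precisely the constraint set of Chordal-DeepSDP in \eqref{eq:chordal-deepsdp}.

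Finally I would spell out both directions of the equivalence. For the forward implication, if \(\gamma\) is feasible for DeepSDP then \(Z(\gamma) \preceq 0\), so the chordal decomposition statement above produces \(Z_1, \ldots, Z_p \preceq 0\) making \((\gamma, Z_1, \ldots, Z_p)\) feasible for Chordal-DeepSDP. Conversely, if \((\gamma, Z_1, \ldots, Z_p)\) is feasible for Chordal-DeepSDP, then each summand \(E_{\mcal{C}_k}^\top Z_k E_{\mcal{C}_k}\) is negative semidefinite as a congruence lift of \(Z_k \preceq 0\), so \(Z(\gamma)\) is a sum of negative semidefinite matrices and hence \(Z(\gamma) \preceq 0\), giving DeepSDP feasibility. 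I do not anticipate any genuine obstacle here since the structural work has been done in the preceding lemma and theorem; the only things to track carefully are the inclusion \(\mcal{E}_\beta \subseteq \mcal{F}_\beta\) used to invoke chordal decomposition on the extended sparsity pattern, and the sign flip that converts the PSD statement of Lemma~\ref{lem:chordal-psd} into the NSD statement required by both programs.
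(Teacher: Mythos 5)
Your proposal is correct and follows essentially the same route as the paper: both invoke Lemma~\ref{lem:chordal-psd} on \(Z(\gamma)\in\mbb{S}^{N+1}(\mcal{F}_\beta)\) using the chordality and maximal cliques established in Theorem~\ref{thm:Z-cliques}, so that \(Z(\gamma)\preceq 0\) holds iff the decomposition into \(Z_k\preceq 0\) exists. Your explicit handling of the PSD-to-NSD sign flip is a detail the paper's one-line proof leaves implicit, but it is the same argument.
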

\begin{proof}
\ifautomatica
\vspace{-2em}
\fi
Because \(Z(\gamma) \in \mbb{S}^{N+1} (\mcal{F})\) and \(\mcal{G}([N+1], \mcal{F})\) is chordal with maximal cliques \(\mcal{C}_1, \ldots, \mcal{C}_p\), we have by Lemma~\ref{lem:chordal-psd} have \(Z(\gamma) \preceq 0\) iff each \(Z_k \preceq 0\) and
\(Z(\gamma) = \sum_{k = 1}^{p} E_{\mcal{C}_k}^\top Z_k E_{\mcal{C}_k}\).
That is, the constraints of~\eqref{eq:deepsdp} and~\eqref{eq:chordal-deepsdp} are simultaneously feasible.
\end{proof}

\begin{remark}
    If Chordal-DeepSDP~\eqref{eq:chordal-deepsdp} is feasible, then the safety condition in Lemma~\ref{lem:deepsdp-safety-cond} holds.
    For the reachability variant that minimizes a convex objective in \(\gamma\), equivalence means that \eqref{eq:chordal-deepsdp} will compute the same optimal values as DeepSDP~\eqref{eq:deepsdp}.
\end{remark}

In summary, we have decomposed the large semidefinite constraint \(Z(\gamma) \preceq 0\) present in DeepSDP into a large equality constraint and a collection of smaller \(Z_k \preceq 0\) constraints.
The primary benefit of Chordal-DeepSDP is computational since the cost of solving an LMI is usually at least cubic in its size~\cite{lofberg2009pre}.
This means that the cost of solving \(Z(\gamma) \preceq 0\) is \(\Omega(N^3)\), whereas the cost of each \(Z_k \preceq 0\) is \(\Omega(\abs{\mcal{C}_k}^3)\) --- which may be much more efficient.
This is preferable for deeper networks, as the number of cliques grows with the number of layers.
Furthermore, our formulation of Chordal-DeepSDP admits any symmetric \(S\), meaning that we can handle arbitrary quadratically-coupled input-output specifications.
This is a crucial advantage over the more conservative approach of \cite{newton2021exploiting}, which can only handle output constraints.

\section{Bolstering Chordal-DeepSDP with Double Decomposition and Other Abstractions}
\label{sec:extensions}

% In Section \ref{sec:main-results}, we have built-up a basic version of Chordal-DeepSDP.
Having built up a basic version of Chordal-DeepSDP,
we show in Section~\ref{sec:double-decomp} that our formulation of \(\mcal{F}\) from \(\mcal{E}\) admits yet \emph{another} level of chordal decomposition that we found useful for performance --- which we call Chordal-DeepSDP-2.
In addition, our construction until this point has only assumed \(Q(\gamma) = Q_{\mrm{sec}}\) for abstracting activation functions \(\phi\).
This simplifies the presentation of our theoretical results, but in practice, we have found additional abstractions useful to more tightly bound the activation behavior --- in Section~\ref{sec:adj-activations}, we show how to incorporate other activation abstractions that target adjacent-layer connections while preserving the same sparsity \(\mcal{E}\) analyzed in Theorem~\ref{thm:Z-cliques}.
% \red{We refer to our extended manuscript~\cite{xue2022parametric} for detailed technical proofs}.

%%%%%%%%%%%%%%%%%%%%%%%%%%%%%%%%%%%%%%%%%%%%%%%%%%%%%
\subsection{Double Decomposition}
\label{sec:double-decomp}

Recall that we obtained \(\mcal{F}\) from \(\mcal{E}\) by treating certain zero entries of \(Z(\gamma)\) as dense: see Figure~\ref{fig:Z-colors} vs Figure~\ref{fig:Z-extensions}.
If we account for the zero entries, this results in an idea for further sparsifying the smaller \(Z_k \preceq 0\) LMIs.
In particular for \(k = 2, \ldots, p\), suppose we enforce the following structure:
\begin{align}
\label{eq:Zk-sparsed}
    Z_k =
    \begin{bmatrix}
        (Z_k)_{11} & 0 & (Z_k)_{13} \\
        0 & (Z_k)_{22} & (Z_k)_{23} \\
        (Z_k)_{13} ^\top & (Z_k)_{23} ^\top & (Z_k)_{33}
    \end{bmatrix}
    \in \mbb{S}^{\abs{\mcal{C}_k}}
    ,
\end{align}
where \((Z_k)_{11} \in \mbb{S}^{n_k + n_{k+1}}\), \((Z_k)_{22} \in \mbb{S}^{n_K}\), \((Z_k)_{33} \in \mbb{R}\).
By setting \((Z_k)_{12} = 0\), we still maintain the sparsity that \(\sum_{k=1}^{p} E_{\mcal{C}_k}^\top Z_k E_{\mcal{C}_k} \in \mbb{S}^{N+1}(\mcal{E})\).
The advantage now is that each such \(Z_k\) is a block-arrow structure, which is known to be chordal~\cite[Section 8.2]{vandenberghe2015chordal}.
For \(2 \leq k \leq p\), we may then express the chordal decomposition of \(Z_k\) as
\begin{align*}
    Z_{k} = E_{\mcal{D}_{k1}}^\top Y_{k1} E_{\mcal{D}_{k1}}
        + E_{\mcal{D}_2}^\top Y_{k2} E_{\mcal{D}_2}
        \in \mbb{S}^{\abs{\mcal{C}_k}}(\mcal{D}_{k1} \cup \mcal{D}_{k2})
\end{align*}
where the two maximal cliques for this block-arrow shape are \(\mcal{D}_{k1}, \mcal{D}_{k2} \subseteq [\abs{\mcal{C}_k}]\) with
\begin{align*}
    \mcal{D}_{k1} &\coloneqq \{v : 1 \leq v \leq n_k + n_{k+1}\} \cup \{\abs{\mcal{C}_k}\}, \\
    \mcal{D}_{k2} &\coloneqq \{v : n_k + n_{k+1} + 1 \leq v \leq \abs{\mcal{C}_k}\},
\end{align*}
where \(\mcal{D}_{k1}\) covers \((Z_k)_{11}, (Z_k)_{13}, (Z_k)_{33}\) and \(\mcal{D}_{k2}\) covers \((Z_k)_{22}, (Z_k)_{23}, (Z_k)_{33}\).
Applying Lemma~\ref{lem:chordal-psd} again we see that \(Z_k \preceq 0\) iff \(Y_{k1}, Y_{k2} \preceq 0\).
This additional sparsity on \(Z_k\) means we may further chordally decompose Chordal-DeepSDP.
This doubly decomposed problem, which we call Chordal-DeepSDP-2, is formulated as:
\begin{align}
\label{eq:chordal-deepsdp-2}
\begin{split}
    \text{find} &\enskip
        \gamma \coloneqq (\gamma_{\mrm{in}}, \gamma_{\mrm{ac}}) \in \Gamma_{\mrm{in}} \times \Gamma_{\mrm{ac}}, \\
    \text{subject to}
        &\enskip Y_{11}, Y_{12}, \ldots, Y_{p1}, Y_{p2} \preceq 0 \\
        &\enskip
        Z(\gamma) = \sum_{k = 1}^{p}
            E_{\mcal{D}_{k1}}^\top Y_{k1} E_{\mcal{D}_{k1}}
        + E_{\mcal{D}_2}^\top Y_{k2} E_{\mcal{D}_2}
\end{split}
\end{align}
where for notational convenience we let \(Y_{11} = Z_1\) with \(\mcal{D}_{11} = \mcal{C}_1\), and \(Y_{12} = 0\) with \(\mcal{D}_{12} = \emptyset\) in the \(k=1\) case.
Note that under this formulation, we may still use the same \(S\) as before, meaning there is no restriction on the permissible safety specifications.

% \vspace{1em}
\begin{theorem}
\label{thm:chordal-deepsdp-2-soundness}
If \(\gamma\) and \(Y_{11}, \ldots, Y_{p2}\) is a solution for Chordal-DeepSDP-2~\eqref{eq:chordal-deepsdp-2}, then there exists \(\gamma\) and some \(Z_1, \ldots, Z_p \) that is a solution to Chordal-DeepSDP~\eqref{eq:chordal-deepsdp}.
\end{theorem}
\begin{proof}
% \vspace{-1.5em}
Observe by Lemma~\ref{lem:chordal-psd} and \eqref{eq:Zk-sparsed} that each \(Y_{k1}, Y_{k2} \preceq 0\) constraint is equivalent to enforcing \(Z_k \preceq 0\) with \((Z_k)_{12} = 0\) for \(k = 2, \ldots, p\).
Since one can recover \(Z_k\) from each feasible \(Y_{k1}, Y_{k2}\), any solution to~\eqref{eq:chordal-deepsdp-2} therefore implies a solution to~\eqref{eq:chordal-deepsdp}.
\end{proof}

\begin{remark}
\label{rem:double-decomp-equiv}
    We conjecture that Chordal-DeepSDP~\eqref{eq:chordal-deepsdp} and Chordal-DeepSDP-2~\eqref{eq:chordal-deepsdp-2} are equivalent in the sense of Theorem~\ref{thm:problems-equivalent}, as we do not observe any accuracy loss in Chordal-DeepSDP-2 in the experiments.
    However, it is not immediately clear whether solving~\eqref{eq:chordal-deepsdp} necessarily implies that \((Z_k)_{12} = 0\) for \(k = 2, \ldots, p\).
    Instead, this zeroing of \((Z_k)_{12}\) is an additional constraint that is assumed in~\eqref{eq:chordal-deepsdp-2}, meaning that \eqref{eq:chordal-deepsdp-2} satisfiable implies \eqref{eq:chordal-deepsdp} satisfiable, but the converse may not hold.
\end{remark}

%%%%%%%%%%%%%%%%%%%%%%%%%%%%%%%%%%%%%%%%%%%%%%%%%%%%%
\subsection{Adjacent-Layer Abstractions}
\label{sec:adj-activations}

It is often desirable to use multiple activation abstractions to tightly bound the network behavior.
More generally, we say that \(Q_k \in \mbb{S}^{1 + 2n_{k+1}}\) is an abstraction for the adjacent-layer connection \(x_{k+1} = \phi(W_k x_k + b_k)\) if the following inequality holds:
\begin{align}
\label{eq:Qk-term}
    \begin{bmatrix} \\ \star \\ \\ \end{bmatrix}^\top
    Q_k
    \begin{bmatrix} W_k & 0 & b_k \\ 0 & I_{n_{k+1}} & 0 \\ 0 & 0 & 1 \end{bmatrix}
    \begin{bmatrix} E_k \\ E_{k+1} \\ E_a \end{bmatrix}
    \mbf{z}
    \geq 0
\end{align}
for all \(x_k \in \mcal{X}_k\), where \(\mcal{X}_k\) includes the reachable set of \(x_k\) from \(x_1 \in \mcal{X}_1\).
Such \(Q_k\) can encode useful information about the activations at each layer.
For instance, if one knows from \(x_1 \in \mcal{X}\) that \(x_k\) always satisfies \(\underline{x_k} \leq x_k \leq \overline{x_k}\), then \(Q_{k-1}\) can encode the fact that \(x_k\) lies within this box, similar to \cite[Section 3.C.4]{fazlyab2020safety}.
Using interval bounds is popular in practice as they are often cheap to compute, and we use the LiRPA library~\cite{xu2020automatic} for this, though other choices are also valid.

Note that the nonnegativity in \eqref{eq:Qk-term} means we may \emph{add} abstractions to yield new ones.
Thus, the inequality
\begin{align}
\label{eq:Qk-sum}
    \sum_{k = 1}^{K-1}
    \begin{bmatrix} \\ \star \\ \\ \end{bmatrix}^\top
    Q_k
    \begin{bmatrix} W_k & 0 & b_k \\ 0 & I_{n_{k+1}} & 0 \\ 0 & 0 & 1 \end{bmatrix}
    \begin{bmatrix} E_k \\ E_{k+1} \\ E_a \end{bmatrix}
    \mbf{z}
    \geq 0
\end{align}
for all \(x_1 \in \mcal{X}\),
would be an abstraction that accounts for \emph{all} \(K-1\) adjacent-layer connections in the network.
Importantly, the formulation of \eqref{eq:Qk-sum} permits us to use a combination of \emph{arbitrary} abstractions \(Q_k\) for \(\phi\) between adjacent layers, so long as they each satisfy the appropriate QC.
In addition, we may also rewrite~\eqref{eq:Qk-sum} into a form resembling that of~\eqref{eq:deepsdp-Q}, which we show in Lemma~\ref{lem:Qadj}.

% \vspace{1em}
\begin{lemma}
\label{lem:Qadj}
There exists \(Q_{\mrm{adj}}\) linearly parametrized by the \(Q_1, \ldots, Q_{K-1}\) such that the inequality
\begin{align*}
    \mbf{z}^\top
    \begin{bmatrix} A & b \\ B & 0 \\ 0 & 1 \end{bmatrix}^\top
    Q_{\mrm{adj}}
    \begin{bmatrix} A & b \\ B & 0 \\ 0 & 1 \end{bmatrix}
    \mbf{z}
    \geq 0
    \enskip \text{for all} \enskip x_1 \in \mcal{X}
\end{align*}
holds iff the inequality \eqref{eq:Qk-sum} holds.
\end{lemma}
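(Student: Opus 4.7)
The plan is to construct $Q_{\mrm{adj}}$ explicitly as a weighted sum of padded copies of the $Q_k$, so that the equivalence between the single quadratic form and the sum in \eqref{eq:Qk-sum} becomes a bookkeeping identity. The key observation is that the image vector
\[
    \tilde{\mbf{z}} \coloneqq \begin{bmatrix} A & b \\ B & 0 \\ 0 & 1 \end{bmatrix} \mbf{z}
    = \mrm{vcat}\parens{W_1 x_1 + b_1, \ldots, W_{K-1} x_{K-1} + b_{K-1},\ x_2, \ldots, x_K,\ 1}
\]
already contains every quantity that appears in \eqref{eq:Qk-sum}; the task is only to extract the right triple $(W_k x_k + b_k,\ x_{k+1},\ 1)$ for each $k$.

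First, I would introduce selector matrices $F_k \in \mbb{R}^{(1 + 2 n_{k+1}) \times (2 n' + 1)}$, where $n' \coloneqq n_2 + \cdots + n_K$, defined so that
\[
    F_k \tilde{\mbf{z}} = \mrm{vcat}\parens{W_k x_k + b_k,\ x_{k+1},\ 1}.
\]
This amounts to picking out the block of the first $n'$ coordinates of $\tilde{\mbf{z}}$ corresponding to the $k$-th activation input, the block of the middle $n'$ coordinates corresponding to $x_{k+1}$, and the final entry. A direct comparison with
\[
    \begin{bmatrix} W_k & 0 & b_k \\ 0 & I_{n_{k+1}} & 0 \\ 0 & 0 & 1 \end{bmatrix}
    \begin{bmatrix} E_k \\ E_{k+1} \\ E_a \end{bmatrix} \mbf{z}
    = \mrm{vcat}\parens{W_k x_k + b_k,\ x_{k+1},\ 1}
\]
then gives the matrix identity
\[
    \begin{bmatrix} W_k & 0 & b_k \\ 0 & I_{n_{k+1}} & 0 \\ 0 & 0 & 1 \end{bmatrix}
    \begin{bmatrix} E_k \\ E_{k+1} \\ E_a \end{bmatrix}
    = F_k \begin{bmatrix} A & b \\ B & 0 \\ 0 & 1 \end{bmatrix},
\]
which is simply the statement that the $F_k$ correctly locate the $k$-th pair of blocks inside the stacked vector $\tilde{\mbf{z}}$.

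Second, I would define
\[
    Q_{\mrm{adj}} \coloneqq \sum_{k=1}^{K-1} F_k^\top Q_k F_k,
\]
which is manifestly linear in the $Q_1, \ldots, Q_{K-1}$. Substituting the matrix identity above into the $k$-th summand of \eqref{eq:Qk-sum} rewrites that summand as $\mbf{z}^\top \tilde M^\top F_k^\top Q_k F_k \tilde M \mbf{z}$, where $\tilde M$ is the tall matrix in the claim. Summing over $k$ and pulling $\tilde M$ out gives $\mbf{z}^\top \tilde M^\top Q_{\mrm{adj}} \tilde M \mbf{z}$, so the scalar inequalities in the two formulations are equal, term by term, for every $\mbf{z}$ (and in particular for every $x_1 \in \mcal{X}$). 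The equivalence is therefore immediate in both directions.

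The proof is essentially bookkeeping, so I do not expect any genuine obstacle; the only mildly delicate step is writing down the selectors $F_k$ cleanly, since $\tilde{\mbf{z}}$ interleaves the two halves (activation inputs and activation outputs) that $Q_k$ wants adjacent. Introducing the shorthand $n' = n_2 + \cdots + n_K$ and defining $F_k$ blockwise (a shift by $n_2 + \cdots + n_k$ in the first half, by $n_2 + \cdots + n_k$ in the second half, and the last row picking out the trailing $1$) keeps the indexing transparent and avoids any off-by-one errors.
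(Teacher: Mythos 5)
Your proposal is correct and matches the paper's proof essentially verbatim: your selector matrices \(F_k\) are exactly the paper's projection matrices \(P_k\) acting on \(\mrm{vcat}(A\mbf{x}+b, B\mbf{x}, 1)\), and the construction \(Q_{\mrm{adj}} \coloneqq \sum_{k=1}^{K-1} F_k^\top Q_k F_k\) is the same weighted sum of padded copies used in the paper. No gaps.
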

Applying again the nonnegativity condition of activation QCs, we can redefine \(Q(\gamma_{\mrm{ac}}) \coloneqq Q_{\mrm{sec}} + Q_{\mrm{adj}}\).
% Intuitively \(Q_{\mrm{sec}}\) is able to enforce \emph{cross-layer} dependencies, while \(Q_{\mrm{adj}}\) can only enforce adjacent-layer dependencies.
Although different abstractions come with their respective trade-offs, combining them into a single \(Q(\gamma_{\mrm{ac}})\) for DeepSDP means that we can more tightly approximate the activation functions of the entire multi-layered network.
Conveniently, using such a \(Q_{\mrm{adj}}\) does not change the overall sparsity of \(Z(\gamma)\), as we will show next.

% \vspace{1em}
\begin{theorem}
\label{thm:new-Z-sparsity}
    Let \(Q(\gamma_{\mrm{ac}}) \coloneqq Q_{\mrm{sec}} + Q_{\mrm{adj}}\) and similarly define \(Z(\gamma)\) as before in~\eqref{eq:deepsdp}, then \(Z(\gamma) \in \mbb{S}^{N + 1}(\mcal{E})\).
\end{theorem}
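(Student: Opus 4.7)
The plan is to exploit the linearity of $Z(\gamma)$ in $Q(\gamma_{\mrm{ac}})$. Under the decomposition $Q(\gamma_{\mrm{ac}}) = Q_{\mrm{sec}} + Q_{\mrm{adj}}$, the activation term $Z_{\mrm{ac}}$ splits additively as $Z_{\mrm{sec}} + Z_{\mrm{adj}}$, where $Z_{\mrm{sec}}$ is exactly the contribution analyzed in Lemma~\ref{lem:Z-sparsity}. Since that lemma already certifies $Z_{\mrm{in}} + Z_{\mrm{sec}} + Z_{\mrm{out}} \in \mbb{S}^{N+1}(\mcal{E}_\beta)$, it suffices to show that $Z_{\mrm{adj}} \in \mbb{S}^{N+1}(\mcal{E}_\beta)$.

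Next I would invoke Lemma~\ref{lem:Qadj} in reverse, which lets me replace the compact quadratic form $[\star]^\top Q_{\mrm{adj}} [\cdots]$ by the equivalent per-layer sum in \eqref{eq:Qk-sum}. This yields
\[
Z_{\mrm{adj}} = \sum_{k=1}^{K-1} \begin{bmatrix} E_k \\ E_{k+1} \\ E_a \end{bmatrix}^\top M_k \begin{bmatrix} E_k \\ E_{k+1} \\ E_a \end{bmatrix},
\]
where $M_k$ is the weight-conjugated version of $Q_k$. The point of this reduction is that each summand's support is now read off directly from the selectors $E_k, E_{k+1}, E_a$: only rows and columns indexed by $x_k$ (positions $S(k-1)+1$ to $S(k)$), $x_{k+1}$ (positions $S(k)+1$ to $S(k+1)$), and $N+1$ (for $E_a$) can be nonzero.

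It then remains to verify containment. For each $k \in \{1, \ldots, K-1\}$, the support of the $k$th summand is contained in $\{(i,j) : S(k-1)+1 \leq i, j \leq S(k+1)\} \cup \mcal{E}_a$, which is clearly a subset of $\mcal{E}_{M,k} \cup \mcal{E}_a$ since $S(k+1) \leq S(k+1) + \beta$. Taking the union over $k$ gives $\mcal{E}_M \cup \mcal{E}_a \subseteq \mcal{E}_\beta$, as required.

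The argument is largely bookkeeping, so I do not expect a serious obstacle. The only subtle step is confirming that rewriting the per-layer abstractions in the flat form of Lemma~\ref{lem:Qadj} does not introduce spurious nonzeros beyond $\mcal{E}_\beta$; this is ensured by the block-diagonal structure of $A$ and the shifted-diagonal structure of $B$, which guarantee that conjugation by $\begin{bmatrix} A & b \\ B & 0 \\ 0 & 1 \end{bmatrix}$ preserves the layer-local support. Since adjacent-layer coupling is precisely what the $\mcal{E}_{M,k}$ blocks in $\mcal{E}_\beta$ were designed to accommodate, no additional sparsity pattern is needed.
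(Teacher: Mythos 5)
Your proposal is correct and follows essentially the same route as the paper: reduce to the per-layer sum \eqref{eq:Qk-sum}, observe that the $k$th summand's support is confined to the rows and columns selected by $E_k$, $E_{k+1}$, $E_a$, and conclude that each summand lies in $\mbb{S}^{N+1}(\mcal{E}_{M,k} \cup \mcal{E}_a) \subseteq \mbb{S}^{N+1}(\mcal{E}_\beta)$. The only cosmetic difference is that you make the additive split $Z_{\mrm{ac}} = Z_{\mrm{sec}} + Z_{\mrm{adj}}$ and the appeal to Lemma~\ref{lem:Z-sparsity} explicit, which the paper leaves implicit.
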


% \vspace{-1em}
This means that Chordal-SDP is equivalent to DeepSDP even with \(Q(\gamma_{\mrm{ac}})\).
This sparsity pattern is, therefore, also identical to that of Chordal-DeepSDP-2.

% Consequently, the formulation of Chordal-DeepSDP is still equivalent to DeepSDP under the definition of \(Q(\gamma_{\mrm{ac}})\) in Theorem~\ref{thm:new-Z-sparsity}.
% Moreover, this pattern is identical to that of Chordal-DeepSDP-2.

% Moreover, this sparsity pattern is identical to what is analyzed in the construction of Chordal-DeepSDP-2.

\section{Numerical Experiments}
\label{sec:experiments}

We now evaluate the effectiveness of our approach.
Primarily, we ask (\textbf{Q1}): what is the scalability of Chordal-DeepSDP and Chordal-DeepSDP-2 against DeepSDP?
Secondarily, we investigate (\textbf{Q2}): what is the impact of different QCs on verification accuracy?
For accuracy of (Chordal)-DeepSDP against other methods, see~\cite{fazlyab2020safety}.

% \vspace{-1em}
\paragraph*{Dataset}
% As our focus is on improving the scalability of DeepSDP to deep networks, we 

% As our primary focus is on improving the scalability of DeepSDP using chordal sparsity, we focus on deep networks.

To study the scalability of (Chordal)-DeepSDP on deep networks, we consider the reachable set of discrete-time dynamical systems \(x(t+1) = f(x(t))\) described by a neural network \(f : \mbb{R}^n \to \mbb{R}^n\) and an initial state \(x(0) \in \mcal{X}\).
By self-composing \(f^{(t)} = f \circ \cdots \circ f\) we obtain arbitrary \(t\)-length trajectories.
As  a non-trivial system we discretize the continuous-time friction-less cart-pole dynamics~\cite{florian2007correct} using parameters from~\cite{chen2022one}:
cart mass \(0.25\) kg, pole mass \(0.1\) kg, length \(0.4\) m, time step \(dt = 0.05\) s, and initial states \(\mcal{X} \subseteq \mbb{R}^4\) where each \(x \in \mcal{X}\) is constrained as
\(x_1 \in [2.0, 2.2] \, \mrm{m}\),
\(x_2 \in [1.0, 1.2] \, \mrm{m/s}\),
\(x_3 \in [-0.174, -0.104] \, \mrm{rad}\),
and \(x_4 \in [-1.0, -0.8] \, \mrm{rad/s}\).

% For a non-trivial system, we consider the continuous-time cart-pole dynamics without friction from~\cite{florian2007correct}, which is a system with 4-dimensional states and 1-dimensional control.
% We use the configuration from~\cite{chen2022one}:
% \begin{align}
% \label{eq:cartpole-X0}
% \begin{split}
%     \mcal{X} = \{x \in \mbb{R}^4 : \enskip
%         &x_1 \in [2.0, 2.2] \, \mrm{m}, \\
%         & x_2 \in [1.0, 1.2] \, \mrm{m/s}, \\
%         &x_3 \in [-0.174, -0.104] \, \mrm{rad}, \\
%         & x_4 \in [-1.0, -0.8] \, \mrm{rad/s}
%     \}.
% \end{split}
% \end{align}

To simulate the state trajectory, we trained two neural networks with ReLU activations of layer sizes \(\{4,40,40,40,40,4\}\) (\(\msf{Cart40}\)) and \(\{4,10,10,10,10,4\}\) (\(\msf{Cart10}\)).
For instance \(\msf{Cart40}^{(3)} : \mbb{R}^4 \to \mbb{R}^4\) is a 3-step simulation of the cart-pole dynamics with \(\msf{Cart40}\), which will have \(4 \times 3 = 12\) hidden layers of size \(40\) each.
We trained two networks because DeepSDP will sometimes, surprisingly, run out of memory for even \(\msf{Cart40}^{(3)}\) (LMI size \(= 4 + 12 \times 40 = 484\)).
% \(\msf{Cart10}\) is thus used to get more comprehensive scalability experiments.

% To yield a closed-loop system, we use the zero-controller \(u \equiv 0\), and train two feedforward neural networks with ReLU activations of layer sizes \(\{4,40,40,40,40,4\}\) (\(\msf{Cart40}\)) and \(\{4,10,10,10,10,4\}\) (\(\msf{Cart10}\)).

% By composing these networks with themselves, we yield discrete-time simulations of the cart-pole dynamics, e.g.
% \(\msf{Cart40}^{(3)} \coloneqq \msf{Cart40} \circ \msf{Cart40} \circ \msf{Cart40}\)
% \begin{align*}
%     \msf{Cart40}^{(3)} = \msf{Cart40} \circ \msf{Cart40} \circ \msf{Cart40} : \mbb{R}^{4} \to \mbb{R}^{4}
% \end{align*}

% For instance \(\msf{Cart40}^{(3)} : \mbb{R}^4 \to \mbb{R}^4\) is a 3-step simulation of the cart-pole dynamics with \(\msf{Cart40}\), which will have \(4 \times 3 = 12\) hidden layers of size \(40\) each.
% We trained two neural networks because DeepSDP will often (surprisingly) run out-of-memory for even \(\msf{Cart40}^{(3)}\) (LMI size \(= 4 + 12 \times 40 = 484\)), so the thinner \(\msf{Cart10}\) is used to get more comprehensive scalability experiments.
% }

% \vspace{-1em}
\paragraph*{QC Configuration}
All networks in our experiments use ReLU activations, and we use three types of activation QCs, all of which will have sparsity \(\mcal{E}\):
\begin{itemize}
    \item Sector-bounded QCs for ReLUs as in \cite[Lemma 4]{fazlyab2020safety}.
    These can be expressed via \(Q_{\mrm{sec}}\), and we remark that similar ones for tanh can be found in \cite[Section 3]{fazlyab2020safety}.

    \item Bounded-nonlinearity QCs as in \cite[Section 3.C.4]{fazlyab2020safety}, where we use interval propagation bounds obtained from LiRPA~\cite{xu2020automatic}.
    These can be expressed via \(Q_{\mrm{adj}}\).

    \item Final-output QCs that bound \(\underline{y} \leq y \leq \overline{y}\), which take similar form to the bounded-nonlinearity QCs.
    We again use information from LiRPA.
\end{itemize}

\begin{figure}[h]
\centering
\begin{subfigure}{0.45\textwidth}
\includegraphics[width=1.0\textwidth]{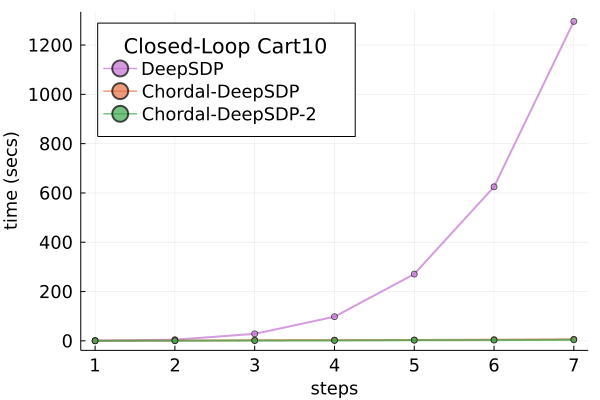}
\end{subfigure}

\begin{subfigure}{0.45\textwidth}
\includegraphics[width=1.0\textwidth]{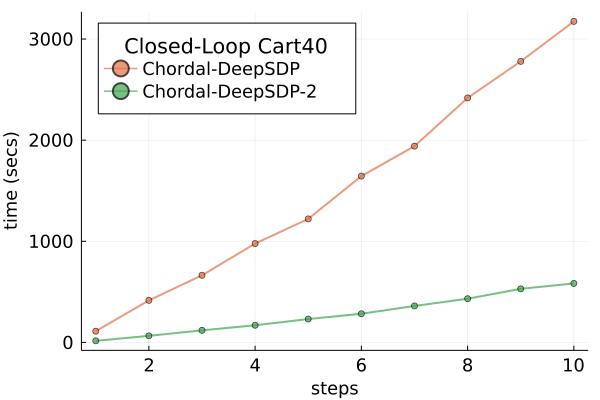}
\end{subfigure}

\caption{(Top)
Runtimes of \(\msf{Cart10}^{(t)}\) for \(t = 1, \ldots, 7\) steps.
(Bottom) Runtimes of \(\msf{Cart40}^{(t)}\) for \(t = 1, \ldots, 10\) steps.
}
% \caption{Runtimes of \(\msf{Cart10}^{(t)}\) for \(t = 1, \ldots, 7\) steps.}
\label{fig:scale-deep-chordal-chordal2}
\end{figure}

% \vspace{-1em}
\paragraph*{System}
All experiments were run on a compute server with Intel Xeon Gold 6148 CPU @ 2.40 GHz with 80 cores and 692 GB of RAM.
We used Julia 1.7.2~\cite{bezanson2017julia} for our implementation with MOSEK 9.3~\cite{andersen2000mosek} as our convex solver.
Relevant Julia packages include \texttt{JuMP.jl} v1.0.0~\cite{dunning2017jump} and \texttt{Dualization.jl} v0.5.3.
% \footnote{\texttt{https://github.com/jump-dev/Dualization.jl}}

\subsection{(\textbf{Q1}): Scalability of Chordal Decomposition}

For scalability, we test how fast each method can find the optimal offset of the hyperplane with normal of the basis vector \(e_1 \in \mbb{R}^{4}\) and initial conditions above.
The corresponding setup for the safety specification is described in~\cite[Section 5.A]{fazlyab2020safety} and is equivalent to computing the top-edge of the bounding boxes in Figure~\ref{fig:reach-trajs}.

Since DeepSDP runs out-of-memory for even \(\msf{Cart40}^{(3)}\), we first use \(\msf{Cart10}\) for a comprehensive comparison of DeepSDP, Chordal-DeepSDP, and Chordal-DeepSDP-2, with results shown in Figure~\ref{fig:scale-deep-chordal-chordal2}.
In the top plot, it can be seen that chordal decomposition via Chordal-DeepSDP and Chordal-DeepSDP-2 significantly outperforms DeepSDP.
The bottom plot shows that the double-decomposition of Chordal-DeepSDP-2 is helpful and further improves scalability over Chordal-DeepSDP.
For DeepSDP we show only up to \(t = 7\) steps, because \(\msf{Cart10}^{(8)}\) often runs out-of-memory.

Interestingly, we found that preprocessing DeepSDP with the \texttt{Dualization.jl} package allows us to achieve a much faster runtime, which we show for \(\msf{Cart40}\) at higher steps in Figure~\ref{fig:scale-deepdual-chordal2}.
This is a surprising phenomenon we discuss in more depth in Section~\ref{sec:discussion}.

\begin{figure}[h]
\centering

\begin{subfigure}{0.45\textwidth}
\includegraphics[width=1.0\textwidth]{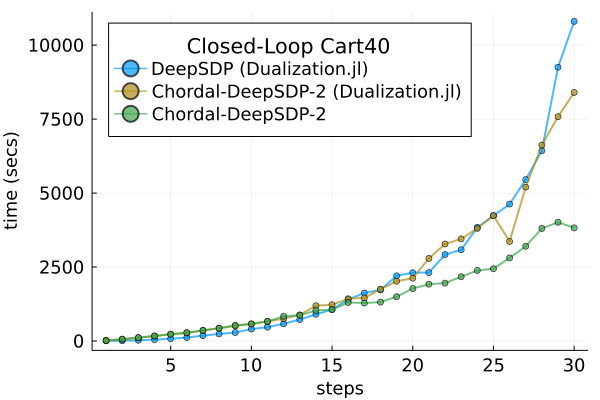}
\end{subfigure}

\caption{
Runtimes of \(\msf{Cart40}^{(t)}\) for \(t = 1, \ldots, 30\) steps.
We do not plot Chordal-DeepSDP here, as it is consistently slower than DeepSDP (\texttt{Dualization.jl}) by more than \(\times 3\), and instead focus on comparison with Chordal-DeepSDP-2.
}
\label{fig:scale-deepdual-chordal2}
\end{figure}

\subsection{(\textbf{Q2}): Activation QCs on Accuracy}

We next investigate the impact of different activation QCs on accuracy to demonstrate that our adjacent-layer abstractions effectively improve accuracy.
First, in Figure~\ref{fig:reach-trajs}, we show a plot of the bounding boxes for \(\msf{Cart40}\) with initial conditions from earlier, and using all three QCs (bounded-nonlinearity, sector-bounded, and final-bounded).
This shows that DeepSDP and its chordally sparse variants can yield reasonably good bounds for state reachability.

\begin{figure}[ht]
\centering
\begin{subfigure}{0.45\textwidth}
\includegraphics[width=1.0\textwidth]{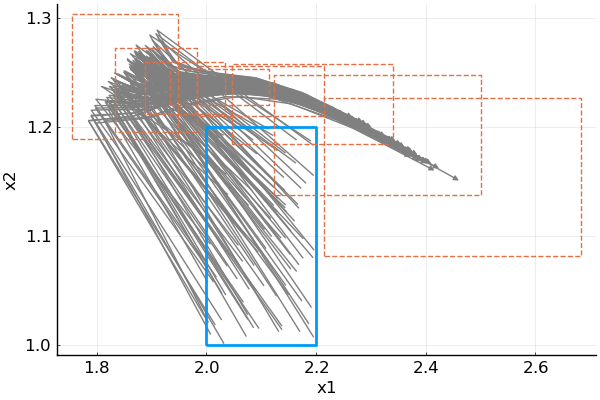}
\end{subfigure}

\begin{subfigure}{0.45\textwidth}
\includegraphics[width=1.0\textwidth]{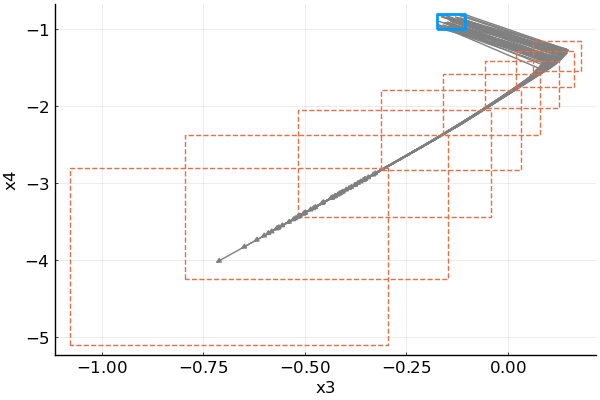}
\end{subfigure}

\caption{Bounds on \(\msf{Cart40}^{(k)}\) for \(k = 1, \ldots, 8\) using sector-bounded, bounded-nonlinearities, and final-bounded QCs.
}
\label{fig:reach-trajs}
\end{figure}

We also in show in Figure~\ref{fig:l2gain-qc-activs} the local \(L_2\) gain of \(\msf{Cart40}^{(t)}\) for \(t = 1, \ldots, 5\) with initial conditions \(\mcal{X} = \{x : 0.5 \leq x \leq 1.5\}\), for different combinations of activation QCs.
We use the DeepSDP with only bounded-nonlinearity QC as a baseline and find that a combination of activations yields tighter bounds for the \(L_2\) gain.

\begin{figure}[ht]
\centering
\begin{subfigure}{0.45\textwidth}
\includegraphics[width=1.0\textwidth]{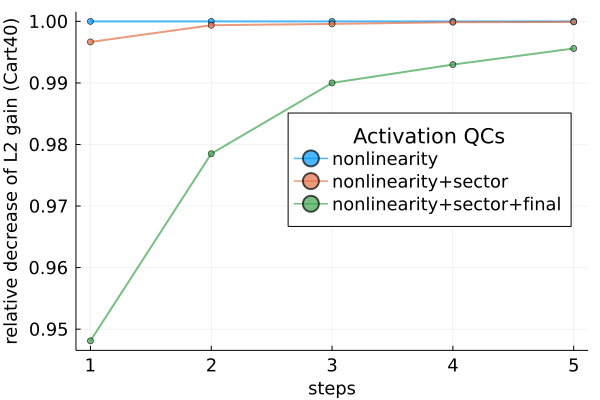}
\end{subfigure}
\caption{The \(L_2\) gain of \(\msf{Cart40}^{(t)}\) for \(t = 1, \ldots, 5\).
We use the bounded-nonlinearity QC as a baseline (1.0), and plot the relative decrease (improvement) when we add the sector-bounded QC and the final-bounded QC.
To see that the bounds are not vacuous, the baseline values of the \(L_2\) gain are: \(2.616, 2.489, 2.552, 2.666, 2.795\).
We remark that in this example, sector-bounded QCs alone are too weak, and will make DeepSDP and its chordal decompositions infeasible.
}
\label{fig:l2gain-qc-activs}
\end{figure}

\section{Discussion}
\label{sec:discussion}

Our experiments show that Chordal-DeepSDP-2 outperforms both DeepSDP and Chordal-DeepSDP on deep networks.
The double decomposition of Chordal-DeepSDP-2 is especially helpful for the experiment of Figure~\ref{fig:scale-deepdual-chordal2}, for which Chordal-DeepSDP is much slower than DeepSDP (\texttt{Dualization.jl}).
Moreover, Chordal-DeepSDP-2 exhibits the best scaling in all cases.

Remarkably, we found that preprocessing DeepSDP with \texttt{Dualization.jl} yields a significant performance gain (often over \(\times 10\) faster).
This is surprising because MOSEK uses a primal-dual interior point method for solving SDPs, so dualizing a problem should not yield any performance gains.
In particular, we suspect that the \texttt{Dualization.jl} transformations on DeepSDP may be highly amenable to MOSEK's internal heuristics.
In fact, DeepSDP (with \texttt{Dualization.jl}) will outperform Chordal-DeepSDP on nearly all the experiments of Figure~\ref{fig:scale-deepdual-chordal2}, as well as Chordal-DeepSDP-2 on sufficiently shallow networks.
% However, we believe that the benefits of \texttt{Dualization.jl}

% Unfortunately, the same \texttt{Dualization.jl} preprocessing for Chordal-DeepSDP and Chordal-DeepSDP-2 only hinders performance, see Figure~\ref{fig:scale-deepdual-chordal2}.

A limitation of our approach is that it does not decompose wide layers:
even with Chordal-DeepSDP-2, each \(Y_{k1} \preceq 0\) constraint has size \(n_k + n_{k+1} + 1\).
Therefore, wider layers induce larger \(Z_k\), which is not preferable.
For our experiments, however, we unroll closed-loop dynamics over long time horizons to yield deep networks, meaning that this is not an issue as long as the initial network is not too wide.

% Additionally, we conjectured in Remark~\ref{rem:double-decomp-equiv} that the double decomposition scheme preserves equivalence between Chordal-DeepSDP-2 and Chordal-DeepSDP --- so by extension DeepSDP.
% This is in part because we have not found an instance where the two differ.

There are several interesting future directions.
For instance, the accuracy of DeepSDP-style verification~\cite{fazlyab2020safety,fazlyab2019efficient} depends on the choice of activation abstraction, and new abstractions beyond sector-bounded inequalities would help improve accuracy.
In addition, a promising recent development is the use of polynomial optimization, specifically sum-of-squares~\cite{parrilo2000structured} techniques, for neural network verification~\cite{brown2022unified,chen2020semialgebraic,newton2021neural,newton2022sparse}.
Polynomial constraints are more expressive than what is easily encodable in DeepSDP and can improve verification accuracy in principle.
However, polynomial optimization techniques at present must rely on a large SDP as their primary computational mechanism, so exploiting problem-specific feaures~\cite{lofberg2009pre} and sparsity~\cite{newton2022sparse} is thus crucial for performance.

\section{Conclusion}

We present Chordal-DeepSDP, a chordally sparse decomposition of DeepSDP that achieves computational scalability in the network depth at zero loss of accuracy and without sacrificing safety constraint expressiveness.
Additionally, we present analysis techniques that lead to yet another level of decomposition that we call Chordal-DeepSDP-2.
We further present a framework for incorporating additional activation constraints via adjacent-layer abstractions, allowing us to tighten the verification accuracy.
Our experiments demonstrate the scalability advantages of chordal decompositions for DeepSDP.

% Our numerical evaluations demonstrate the effectiveness of our approach and illustrate the computational advantages of Chordal-DeepSDP and Chordal-DeepSDP-2 over DeepSDP.

\bibliographystyle{IEEEtran}
\bibliography{sources}

% \newpage

\appendix

\section{Proofs of Section~\ref{sec:main-results}}
\label{appendix:main-results}

\subsection{Results for Lemma~\ref{lem:Z-sparsity}}

\begin{proof}[Proof of Lemma~\ref{lem:Z-sparsity}]
First, it is easy to see that \(\mcal{E}\) is a disjoint union by inspection.
Next, we have that
\begin{align*}
    Z_{\mrm{in}} &\in \mbb{S}^{N+1} (\mcal{E}_M \cup \mcal{E}_a),
        \tag{Lemma~\ref{lem:Zin-sparsity}} \\
    Z_{\mrm{ac}} &\in \mbb{S}^{N+1} (\mcal{E}_M \cup \mcal{E}_a),
        \tag{Lemma~\ref{lem:Zac-sparsity}} \\
    Z_{\mrm{out}} &\in \mbb{S}^{N+1} (\mcal{E}_M \cup \mcal{E}_a \cup \mcal{E}_{1,K}),
    \tag{Lemma~\ref{lem:Zout-sparsity}}
\end{align*}
and so the sum \(Z_{\mrm{in}} + Z_{\mrm{ac}} + Z_{\mrm{out}} = Z(\gamma) \in \mbb{S}^{N+1} (\mcal{E})\).
\end{proof}

\begin{remark}
\label{rem:EtXE-expand}
Suppose a graph has \(n = 4\) vertices and clique \(\mcal{C} = \{1, 2, 4\}\), then the projection matrix \(E_{\mcal{C}}\) has the following property for any \(X \in \mbb{S}^3\)
that is convenient for checking dense entries:
\begin{align*}
    E_{\mcal{C}}^\top X E_{\mcal{C}}
    = \begin{bmatrix}
        X_{11} & X_{12} & 0 & X_{14} \\
        X_{21} & X_{22} & 0 & X_{24} \\
        0 & 0 & 0 & 0 \\
        X_{41} & X_{42} & 0 & X_{44}
    \end{bmatrix}.
\end{align*}
% This is convenient for quickly determining dense entries.
\end{remark}

\begin{lemma}
    \label{lem:Zin-sparsity}
    It holds that \(Z_{\mrm{in}} \in \mbb{S}^{N+1}(\mcal{E}_M \cup \mcal{E}_a)\).
\end{lemma}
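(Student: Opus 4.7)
The plan is to unpack the definition of $Z_{\mrm{in}}$ and argue directly from the structure of the selector matrices that every potentially nonzero entry lies inside $\mcal{E}_M \cup \mcal{E}_a$. Recall from~\eqref{eq:input_abst} that
\[
    Z_{\mrm{in}} = \begin{bmatrix} E_1 \\ E_a \end{bmatrix}^\top P(\gamma_{\mrm{in}}) \begin{bmatrix} E_1 \\ E_a \end{bmatrix},
\]
where $E_1 \in \mbb{R}^{n_1 \times (N+1)}$ extracts the block of indices $\{1,\ldots,n_1\}$ and $E_a \in \mbb{R}^{1 \times (N+1)}$ extracts the single index $N+1$. The only rows/columns of $[E_1;E_a]$ that are not identically zero are those indexed by $\{1,\ldots,n_1\}\cup\{N+1\}$, so the same must be true of $Z_{\mrm{in}}$.

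Thus it suffices to show that every index pair $(i,j)$ with $i,j \in \{1,\ldots,n_1\}\cup\{N+1\}$ satisfies $(i,j) \in \mcal{E}_M \cup \mcal{E}_a$. I would split into two cases. First, if either $i = N+1$ or $j = N+1$, then $(i,j) \in \mcal{E}_a$ immediately by definition of $\mcal{E}_a$. Second, if $1 \leq i,j \leq n_1$, I invoke the first block in the decomposition $\mcal{E}_M = \mcal{E}_{M,1} \cup \cdots \cup \mcal{E}_{M,K-1}$. Since $S(0) = 0$ and $S(2) = n_1 + n_2 \geq n_1$, the set
\[
    \mcal{E}_{M,1} = \{(i,j) : 1 \leq i,j \leq n_1 + n_2 + \beta\}
\]
contains $\{1,\ldots,n_1\}^2$, hence $(i,j) \in \mcal{E}_{M,1} \subseteq \mcal{E}_M$.

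Combining the two cases yields $Z_{\mrm{in}} \in \mbb{S}^{N+1}(\mcal{E}_M \cup \mcal{E}_a)$. The argument is essentially bookkeeping: no real obstacle beyond correctly identifying which rows/columns of the selector matrices are nonzero and then matching indices against the definitions of $\mcal{E}_{M,1}$ and $\mcal{E}_a$. The analogous lemmas for $Z_{\mrm{ac}}$ and $Z_{\mrm{out}}$ will require a similar but more involved index analysis, since $Z_{\mrm{ac}}$ couples selectors across adjacent layers through the banded matrix $T$, and $Z_{\mrm{out}}$ couples $E_1$ with $E_K$, but here for $Z_{\mrm{in}}$ the calculation is immediate.
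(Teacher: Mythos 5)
Your proof is correct and follows essentially the same route as the paper's: identify that the only potentially dense entries of \(Z_{\mrm{in}}\) are those indexed by \(\{1,\ldots,n_1\}\cup\{N+1\}\), then match the two cases against \(\mcal{E}_{M,1}\) and \(\mcal{E}_a\) respectively. No gaps.
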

\begin{proof}
    Recall \eqref{eq:input-qc} and observe that \((Z_{\mrm{in}})_{ij}\) is dense iff exactly one of the following disjoint conditions holds:
    \begin{enumerate}
        \item[(1)] \(i \in \mcal{V}_1\) and \(j \in \mcal{V}_1\)
        \item[(2)] \(i \in \mcal{V}_1\) and \(j = N+1\) (or the symmetric case)
        \item[(3)] \(i = N+1\) and \(j = N+1\)
    \end{enumerate}
    If (1) holds, then \((i,j) \in \mcal{E}_{M,1}\); if any of (2) or (3) holds, then \((i,j) \in \mcal{E}_a\).
    Thus we have that \((i,j)\) dense implies \((i,j) \in \mcal{E}_M \cup \mcal{E}_a\), and so \(Z_{\mrm{in}} \in \mbb{S}^{N+1} (\mcal{E}_M \cup \mcal{E}_a)\).
\end{proof}

% \red{
% \begin{proof}
%    Recall \eqref{eq:input-qc} and observe that \((Z_{\mrm{in}})_{ij}\) is dense iff
%     \begin{align*}
%         i, j \in \underbrace{\{v : 1 \leq v \leq n_1\}}_{\mcal{B}_1}
%         \cup \underbrace{\{v : v = N + 1\}}_{\mcal{B}_a}
%     \end{align*}
%     where \(\mcal{B}_1, \mcal{B}_a \subseteq [N+1]\) are disjoint.
%     Equivalently, \((Z_{\mrm{in}})_{ij}\) is dense iff any of the following conditions hold:
%     \begin{enumerate}
%         \item[(1)] \(i \in \mcal{B}_1\) and \(j \in \mcal{B}_1\)
%         \item[(2)] \(i \in \mcal{B}_1\) and \(j \in \mcal{B}_a\) (or the symmetric case)
%         \item[(3)] \(i \in \mcal{B}_a\) and \(j \in \mcal{B}_a\)
%     \end{enumerate}
%     If (1) holds, then \((i,j) \in \mcal{E}_{M, 1}\); if any of (2) or (3) holds, then \((i,j) \in \mcal{E}_a\).
%     Thus \((i,j)\) dense implies \((i,j) \in \mcal{E}_M \cup \mcal{E}_a\), and so \(Z_{\mrm{in}} \in \mbb{S}^{N+1} (\mcal{E}_M \cup \mcal{E}_a)\).
% \end{proof}
% }

\begin{lemma}
\label{lem:Zac-sparsity}
    It holds that \(Z_{\mrm{ac}} \in \mbb{S}^{N+1} (\mcal{E}_M \cup \mcal{E}_a)\).
\end{lemma}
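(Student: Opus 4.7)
The plan is to decompose $Z_{\mrm{ac}} = \tilde{M}^\top Q_{\mrm{sec}} \tilde{M}$, where $\tilde{M} = \begin{bmatrix} A & b \\ B & 0 \\ 0 & 1 \end{bmatrix}$, into two pieces: the border row/column of index $N+1$, which automatically lies in $\mcal{E}_a$, and the upper-left $N\times N$ principal block. Because the bottom row of $\tilde{M}$ restricted to its first $N$ columns is zero, the $q_{13}, q_{23}, q_{33}$ entries of $Q_{\mrm{sec}}$ only contribute to the border row and column; hence the principal block equals $M^\top Q' M$ with $M := \begin{bmatrix} A \\ B \end{bmatrix}$ and $Q' := \begin{bmatrix} -2abT & (a+b)T \\ (a+b)T & -2T \end{bmatrix}$. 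It therefore suffices to show this $N\times N$ block has sparsity contained in $\mcal{E}_M$.

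To analyze it I would partition column indices by layer: column $p$ belongs to layer $\tau(p) = k$ when $S(k-1)+1 \leq p \leq S(k)$. Writing $\hat{S}(\ell) := n_2 + \cdots + n_{\ell+1}$ so that layer $\ell$'s $T$-indices are $[\hat{S}(\ell-2)+1,\hat{S}(\ell-1)]$, the nonzero rows of $M_{\cdot p}$ are confined to layer $\tau(p)$'s $T$-indices (contributed by $B$'s identity block, when $\tau(p)\geq 2$) together with layer $\tau(p)+1$'s $T$-indices (contributed by $W_{\tau(p)}$ in $A$, when $\tau(p) \leq K-1$). Each entry $(M^\top Q' M)_{pq}$ is a bilinear form $\sum T_{ij}(\cdot)(\cdot)$ and hence can be nonzero only if there exist $i$ in the $T$-support of $p$ and $j$ in the $T$-support of $q$ with $T_{ij}\neq 0$, which by the $\beta$-banded structure of $T$ requires $|i-j|\leq\beta$.

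The key claim is that every such nonzero entry $(p,q)$ lies in some $\mcal{E}_{M,k}$. WLOG take $k_p := \tau(p) \leq \tau(q) =: k_q$ and $d := k_q - k_p$; the target is $(p,q) \in \mcal{E}_{M,k_p}$, which amounts to $q \leq S(k_p+1)+\beta$. When $d \leq 1$ the block is entirely contained in $[S(k_p-1)+1, S(k_p+1)]^2 \subseteq \mcal{E}_{M,k_p}$ without further argument. The main obstacle is the case $d \geq 2$, where nonzero entries arise only when the $T$-band is wide enough to bridge non-adjacent layers. Of the four summands $A^\top TA, A^\top TB, B^\top TA, B^\top TB$, the $A^\top TB$ piece is the easiest to make nonzero and produces the widest range of active $q$'s; it forces $j = \hat{S}(k_q-2) + (q - S(k_q-1))$ from $B_{jq}$'s identity structure, and combining $j \leq i + \beta \leq \hat{S}(k_p) + \beta$ with the telescoping identity $\hat{S}(k_q-2) - \hat{S}(k_p) = n_{k_p+2} + \cdots + n_{k_q-1}$ directly yields $q \leq S(k_p+1)+\beta$. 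The other three summands each demand a strictly larger slice of the band (hence impose equal or stricter bounds on $q$), and the same accounting shows they too leave $(p,q)$ inside $\mcal{E}_{M,k_p}$. Combined with the border row and column lying in $\mcal{E}_a$, we conclude $Z_{\mrm{ac}} \in \mbb{S}^{N+1}(\mcal{E}_M \cup \mcal{E}_a)$.
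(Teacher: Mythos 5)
Your proof is correct, and its first step --- peeling the $(N+1)$st row and column off into $\mcal{E}_a$ (using that the bottom block row of the outer factor is $[\,0 \;\; 1\,]$) and reducing to the $N\times N$ principal block
\begin{align*}
\begin{bmatrix} A \\ B \end{bmatrix}^\top
\begin{bmatrix} -2ab\,T & (a+b)T \\ (a+b)T & -2T \end{bmatrix}
\begin{bmatrix} A \\ B \end{bmatrix}
\end{align*}
--- is exactly the decomposition the paper uses. Where you genuinely diverge is in how the containment of this block in $\mbb{S}^N(\mcal{E}_M)$ is established: the paper delegates it entirely to Lemma~\ref{lem:M-sparsity}, which is itself a one-line citation of \cite[Theorem 1]{xue2022chordal}, whereas you prove it from scratch by tracking the $T$-row support of each column of $A$ (layer $\tau(p)+1$'s indices, via $W_{\tau(p)}$) and of $B$ (layer $\tau(p)$'s indices, via the identity block) and then using the $\beta$-band of $T$ to bound how far the layer index $k_q$ can exceed $k_p$ while still yielding a dense entry; your telescoping computation $q \leq S(k_q-1)-\hat{S}(k_q-2)+\hat{S}(k_p)+\beta = S(k_p+1)+\beta$ for the $A^\top T B$ summand is the crux and is correct. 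What your route buys is a self-contained argument that also reveals exactly which cross-layer entries fill in as $\beta$ grows; the one place where your wording glosses over a detail is the claim that the other three summands ``impose equal or stricter bounds on $q$'': for $A^\top T A$ and $B^\top T A$ the bound on $q$ does not propagate through $j$ as it does for $A^\top T B$ (since $W_{k_q}$ is dense, $j$ does not pin down $q$); rather, the nonvanishing condition forces $\beta \geq n_{k_p+2}+\cdots+n_{k_q}+1 > S(k_q)-S(k_p+1)$, so that \emph{all} of layer $k_q$ automatically sits below $S(k_p+1)+\beta$. The accounting still closes, so the proof stands, but that case deserves its own sentence rather than being folded into ``the same accounting.''
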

\begin{proof}
Recall \eqref{eq:deepsdp-Q} and express \(Z_{\mrm{ac}} \in \mbb{S}^{N+1}\) as:
\begin{align*}
    Z_{\mrm{ac}}
    &=
    \left[\begin{array}{c|c}
        A & b \\ B & 0 \\ \hline 0 & 1
    \end{array}\right]^\top
    \left[\begin{array}{cc|c}
        a_{11} T & a_{12} T & q_{13} \\
        a_{12} T & a_{22} T & q_{23} \\
        \hline
        \star & \star & q_{33}
    \end{array}\right]
    \left[\begin{array}{c|c}
        A & b \\ B & 0 \\ \hline 0 & 1
    \end{array}\right] \\
    &=
    \left[\begin{array}{c|c}
        \begin{bmatrix} A \\ B \end{bmatrix}^\top
        \begin{bmatrix} a_{11} T & a_{12} T \\ a_{12} T & a_{22} T \end{bmatrix}
        \begin{bmatrix} A \\ B \end{bmatrix}
        & \star \\
        \hline
        \star & \star
    \end{array}\right]
    =
    \left[\begin{array}{c|c}
        M & p_{12} \\ \hline \star & p_{22}
    \end{array}\right]
    % &=
    % \left[\begin{array}{c|c}
    %     M & p_{12} \\ \hline \star & p_{22}
    % \end{array}\right]
    % \in \mbb{S}^{N+1}
\end{align*}
where \(p_{12} \in \mbb{R}^{N}\) and \(p_{22} \in \mbb{R}\) follow from straightforward calculations.
From Lemma~\ref{lem:M-sparsity} we have \(M \in \mbb{S}^{N}(\mcal{E}_M)\), and so it follows that \(Z_{\mrm{ac}} \in \mbb{S}^{N+1}(\mcal{E}_M \cup \mcal{E}_a)\).
\end{proof}

\begin{lemma}
\label{lem:Zout-sparsity}
It holds that \(Z_{\mrm{out}} \in \mbb{S}^{N+1} (\mcal{E}_M \cup \mcal{E}_a \cup \mcal{E}_{1,K})\).
\end{lemma}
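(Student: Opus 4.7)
The plan is to mimic the case-analysis in the proofs of Lemmas~\ref{lem:Zin-sparsity} and~\ref{lem:Zac-sparsity}, now with three blocks of ``active'' indices rather than one or two. Starting from the definition of $Z_{\mrm{out}}$ in~\eqref{eq:safety} and applying the observation from the preceding remark about projection matrices, $(Z_{\mrm{out}})_{ij}$ can only be dense when both the $i$-th and $j$-th columns of the vertically stacked row selector containing $E_1$, $E_K$, and $E_a$ are nonzero. These ``active'' indices decompose as the disjoint union $\mcal{B}_1 \cup \mcal{B}_K \cup \mcal{B}_a$, where $\mcal{B}_1 \coloneqq \{v : 1 \leq v \leq n_1\}$, $\mcal{B}_K \coloneqq \{v : S(K-1) + 1 \leq v \leq N\}$, and $\mcal{B}_a \coloneqq \{N+1\}$. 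Neither $S$ nor $W_K$ nor $b_K$ can produce density at columns outside these index sets, so it suffices to handle pairs $(i,j)$ drawn from these three blocks.

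Next I would enumerate the six combinations (up to symmetry) and show each is absorbed into $\mcal{E}_M \cup \mcal{E}_a \cup \mcal{E}_{1,K}$. If $i, j \in \mcal{B}_1$, then $1 \leq i, j \leq n_1 \leq S(2) + \beta$, so $(i,j) \in \mcal{E}_{M,1} \subseteq \mcal{E}_M$. If $i, j \in \mcal{B}_K$, then $S(K-2) + 1 \leq S(K-1) + 1 \leq i, j \leq N \leq S(K) + \beta$, so $(i,j) \in \mcal{E}_{M,K-1} \subseteq \mcal{E}_M$. If $i = j = N+1$, or if exactly one of $i, j$ equals $N+1$, then $(i,j) \in \mcal{E}_a$. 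Finally, if one of $i, j$ lies in $\mcal{B}_1$ and the other in $\mcal{B}_K$, then by definition of $\mcal{E}_{1,K}$ we immediately get $(i,j) \in \mcal{E}_{1,K}$. In every case $(i,j) \in \mcal{E}_M \cup \mcal{E}_a \cup \mcal{E}_{1,K}$, proving the claim.

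The hard part, such as it is, is purely bookkeeping: one must verify that the cross block $\mcal{B}_1 \times \mcal{B}_K$ (and its transpose) is exactly what $\mcal{E}_{1,K}$ was defined to cover, and that the diagonal block $\mcal{B}_K \times \mcal{B}_K$ is absorbed by $\mcal{E}_{M,K-1}$ (using $N = S(K) \leq S(K) + \beta$). No calculation involving $S$, $W_K$, or $b_K$ is needed: the sparsity pattern of $Z_{\mrm{out}}$ is controlled entirely by which rows and columns of the ambient $(N+1)$-dimensional matrix are ``activated'' by the selectors $E_1$, $E_K$, $E_a$, in the same way that the analogous proofs of Lemmas~\ref{lem:Zin-sparsity} and~\ref{lem:Zac-sparsity} go through.
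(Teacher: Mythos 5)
Your proposal is correct and follows essentially the same route as the paper's proof: identify the dense entries of \(Z_{\mrm{out}}\) as those indexed by the disjoint blocks \(\mcal{B}_1, \mcal{B}_K, \mcal{B}_a\), then enumerate the six pairings and absorb them into \(\mcal{E}_{M,1}\), \(\mcal{E}_{M,K-1}\), \(\mcal{E}_{1,K}\), and \(\mcal{E}_a\) respectively. Your version merely spells out the boundary inequalities (e.g. \(n_1 \leq S(2) + \beta\) and \(N \leq S(K) + \beta\)) a bit more explicitly than the paper does.
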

\begin{proof}
Recall \eqref{eq:safety} and observe that \((Z_{\mrm{out}})_{ij}\) is dense iff exactly one of the following disjoint conditions holds:
\begin{enumerate}
    \item[(1)] \(i \in \mcal{V}_1\) and \(j \in \mcal{V}_1\)
    \item[(2)] \(i \in \mcal{V}_1\) and \(j \in \mcal{V}_K\) (or the symmetric case)
    \item[(3)] \(i \in \mcal{V}_1\) and \(j = N+1\) (or the symmetric case)
    \item[(4)] \(i \in \mcal{V}_K\) and \(j \in \mcal{V}_K\)
    \item[(5)] \(i \in \mcal{V}_K\) and \(j = N+1\) (or the symmetric case)
    \item[(6)] \(i = N+1\) and \(j = N+1\)
\end{enumerate}
If (1) holds, then \((i,j) \in \mcal{E}_{M,1}\);
if (4) holds, then \((i,j) \in \mcal{E}_{M, K-1}\);
if (2) holds, then \((i,j) \in \mcal{E}_{1,K}\);
if any of (3), (5), or (6) holds, then \((i,j) \in \mcal{E}_a\).
Thus if \((i,j)\) is dense, then \((i,j) \in \mcal{E}_M \cup \mcal{E}_a \cup \mcal{E}_{1,K}\).
Consequently we have that \(Z_{\mrm{out}} \in \mbb{S}^{N+1} (\mcal{E}_M \cup \mcal{E}_a \cup \mcal{E}_{1,K})\).
\end{proof}

\begin{lemma}
\label{lem:M-sparsity}
    For all \(a_{11}, a_{12}, a_{22} \in \mbb{R}\) it holds that
    \begin{align*}
        M \coloneqq
        \begin{bmatrix} A \\ B \end{bmatrix}^\top
        \begin{bmatrix} a_{11} T & a_{12} T \\ a_{12} T & a_{22} T \end{bmatrix}
        \begin{bmatrix} A \\ B \end{bmatrix}
        \in \mbb{S}^N (\mcal{E}_M).
    \end{align*}
\end{lemma}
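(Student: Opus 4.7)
The plan is to exploit the rank-one decomposition of \(T\) used in \eqref{eq:T-sector} and carefully track how \(A\) and \(B\) push those rank-one factors into column space. Substituting \(T = \sum_{i=1}^{n} \lambda_{ii} e_i e_i^\top + \sum_{(i,j) \in \mcal{I}_\beta} \lambda_{ij} \Delta_{ij} \Delta_{ij}^\top\) into the definition of \(M\) expresses \(M\) as a sum, over \(\xi \in \{e_i\} \cup \{\Delta_{ij} : (i,j) \in \mcal{I}_\beta\}\), of rank-\(\leq 2\) outer products
\[
    a_{11}(A^\top \xi)(A^\top \xi)^\top + a_{12}(A^\top \xi)(B^\top \xi)^\top + a_{12}(B^\top \xi)(A^\top \xi)^\top + a_{22}(B^\top \xi)(B^\top \xi)^\top.
\]
The row/column support of each summand is contained in \(\mrm{supp}(A^\top \xi) \cup \mrm{supp}(B^\top \xi)\), so the task reduces to bounding this union for each admissible \(\xi\) and showing it fits inside some \(\mcal{E}_{M,k}\).

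To do so I would set up block bookkeeping for the output indices of \(\phi\). Writing \(m_k \coloneqq n_2 + \cdots + n_k\), the output space of \(\phi\) partitions into blocks \([m_k+1, m_{k+1}]\) for \(k = 1, \ldots, K-1\); inspecting the definitions of \(A\) and \(B\) shows that for any row index \(r\) in output block \(k\), the \(r\)-th row of \(A\) is supported in input block \(k\), namely \([S(k-1)+1, S(k)]\), while the \(r\)-th row of \(B\) has a single \(1\) at column \(n_1 + r \in [S(k)+1, S(k+1)]\). Applied to \(\xi = e_i\) with \(i\) in output block \(k\), the combined support of \(A^\top e_i\) and \(B^\top e_i\) immediately lies in \([S(k-1)+1, S(k+1)]\), which is a subset of the index range of \(\mcal{E}_{M,k}\).

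The more delicate case is \(\xi = \Delta_{ij} = e_i - e_j\) with \(j - i \leq \beta\). Let \(i\) lie in output block \(k\) and \(j\) in output block \(k' \geq k\). The band condition \(j \leq i + \beta\) together with \(i \leq m_{k+1}\) yields \(j \leq m_{k+1} + \beta\), and combining with \(j \geq m_{k'} + 1\) gives \(m_{k'} \leq m_{k+1} + \beta - 1\), hence \(S(k') \leq S(k+1) + \beta - 1\) and \(n_1 + j \leq S(k+1) + \beta\). Together these bounds cap the right endpoint of \(\mrm{supp}(A^\top \Delta_{ij}) \cup \mrm{supp}(B^\top \Delta_{ij})\) by \(S(k+1) + \beta\), while the left endpoint is always at least \(S(k-1)+1\). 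Thus the combined support lies in \([S(k-1)+1, S(k+1)+\beta]\), i.e.\ inside \(\mcal{E}_{M,k}\). Summing over all admissible \(\xi\) places every nonzero entry of \(M\) in \(\bigcup_k \mcal{E}_{M,k} = \mcal{E}_M\), giving \(M \in \mbb{S}^N(\mcal{E}_M)\).

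I expect the main obstacle to be exactly the \(\Delta_{ij}\) case: one has to use \(j - i \leq \beta\) twice --- once to control how far into later input blocks the image \(B^\top e_j\) can reach, and once to cap \(S(k')\) for the \(A^\top e_j\) contribution. Both bounds come from the single inequality \(j \leq m_{k+1} + \beta\), but the index arithmetic relating \(S(\cdot)\), \(m_\cdot\), and \(n_1\) is where any slip would occur; the rest is a routine case-split over \(\xi\).
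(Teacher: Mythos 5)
Your argument is correct, but it is genuinely different from what the paper does: the paper's entire proof of Lemma~\ref{lem:M-sparsity} is a one-line citation of \cite[Theorem 1]{xue2022chordal}, where the matrix \(Z_\alpha\) of that reference is observed to have the same structure as \(M\) (with the band parameter \(\beta\) playing the role of \(\tau\) there). You instead supply a self-contained first-principles proof: expand \(T\) into its rank-one factors \(\xi\xi^\top\) with \(\xi \in \{e_i\} \cup \{\Delta_{ij}\}\), note that each resulting summand of \(M\) has row and column support contained in \(\mrm{supp}(A^\top\xi)\cup\mrm{supp}(B^\top\xi)\), and bound that union block by block. Your index bookkeeping checks out: with \(i\) in output block \(k\) and \(j\) in output block \(k'\ge k\), the single inequality \(j\le m_{k+1}+\beta\) does indeed give both \(n_1+j\le S(k+1)+\beta\) (controlling the \(B^\top e_j\) column) and \(S(k')\le S(k+1)+\beta-1\) (controlling the reach of \(A^\top e_j\)), so every term lands inside \([S(k-1)+1,\,S(k+1)+\beta]^2\), i.e.\ inside \(\mcal{E}_{M,k}\). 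What your approach buys is a verifiable, reference-free proof that makes explicit exactly where the band condition \(j-i\le\beta\) enters the sparsity of \(M\); what the paper's approach buys is brevity, at the cost of requiring the reader to check that the translation of notation to \cite{xue2022chordal} is faithful.
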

\begin{proof}
We first rewrite the expression for \(M\) as
\begin{align*}
% \label{eq:M-sum}
    a_{11} A^\top T A + a_{12} A^\top T B + a_{12} B^\top T A + a_{22} B^\top T B,
\end{align*}
where we will first present a decomposition of \(A, B, T\), and then analyze each term separately.
Now observe that we may express \(A\) and \(B\) as
\begin{align*}
    A = \sum_{k = 1}^{K-1} F_k ^\top W_k G_k,
    \quad B = \sum_{k = 1}^{K-1} F_k ^\top G_{k+1},
\end{align*}
where \(F_k \in \mbb{R}^{n_{k+1} \times (n_2 + \cdots + n_K)}\) is the \(k\)th block-index selector over \(\{n_2, \ldots, n_K\}\),
and \(G_k \in \mbb{R}^{n_k \times (n_1 + \cdots + n_K)}\) is the \(k\)th block index selector over \(\{n_1, \ldots, n_K\}\).
Similarly, since \(T\) is diagonal, we may write:
% \(T = \sum_{k = 1}^{K-1} F_k ^\top T_k F_k\),
\begin{align*}
    T = \sum_{k = 1}^{K-1} F_k ^\top T_k F_k,
\end{align*}
where each \(T_k\) is itself diagonal.
Moreover, the \(F_k\) matrices are orthogonal in the sense that:
\begin{align*}
    F_k F_k ^\top = I, &\enskip\text{and}\enskip F_j F_k ^\top = 0 \enskip \text{when \(j \neq k\)},
\end{align*}
and similar conditions hold for \(G_k\).
By orthogonality of the \(F_k\), we have the following for each term of \(M\):
\begin{align}
    % TA &= \sum_{k=1}^{K-1} F_k ^\top T_k W_k G_k, \label{eq:TA} \\
    A^\top T A &= \sum_{k=1}^{K-1} G_k ^\top W_k ^\top T_k W_k G_k, \label{eq:ATA} \\
    B^\top T A &= \sum_{k=1}^{K-1} G_{k+1} T_k W_k G_k, \label{eq:BTA} \\
    B^\top T B &= \sum_{k=1}^{K-1} G_{k+1} T_k G_{k+1}, \label{eq:BTB}
\end{align}
where \(A^\top T B\) follows from~\eqref{eq:BTA} by symmetry.
Grouping each summation term of \eqref{eq:ATA}, \eqref{eq:BTA}, \eqref{eq:BTB}, we have that \(M = \sum_{k = 1}^{K-1} M_k\) where each
\begin{align*}
    % M_k &= a_{11} G_k ^\top W_k ^\top T_k W_k G_k
    %     + a_{22} G_{k+1} T_k G_{k+1} \\
    %   &\qquad + a_{12} \parens{G_{k+1} ^\top T_k W_k G_k + G_k ^\top W_k ^\top T_k G_{k+1}}
    %   % &\qquad + a_{22} G_{k+1} T_k G_{k+1}
    M_k &= a_{11} G_k ^\top W_k ^\top T_k W_k G_k \\
      &\quad + a_{12} \parens{G_{k+1} ^\top T_k W_k G_k + G_k ^\top W_k ^\top T_k G_{k+1}} \\
      &\quad + a_{22} G_{k+1} T_k G_{k+1}
\end{align*}
We now identify each \(G_k \in \mbb{R}^{n_k \times (n_1 + \cdots + n_K)}\) with the projections \(E_{\mcal{C}}\) described in Remark~\ref{rem:EtXE-expand}, and with this we see that each \(G_k\) is a projection for \(\mcal{V}_k\).
Since \(N = n_1 + \cdots + n_K\), we thus have \(M_k \in \mbb{S}^{N}(\mcal{E}_{M,k})\), and so
\begin{align*}
    M = \sum_{k=1}^{K-1} M_k
    \in \mbb{S}^{N}\parens{\bigcup_{k=1}^{K-1} \mcal{E}_{M,k}}
    = \mbb{S}^{N} (\mcal{E}_M),
\end{align*}
with the definitions of \(\mcal{E}_M, \mcal{E}_{M,k}\) given in Lemma~\ref{lem:Z-sparsity}.
\end{proof}

\subsection{Results for Theorem~\ref{thm:Z-cliques}}

\begin{figure}[ht]
\centering
\begin{subfigure}{0.14\textwidth}
\begin{center}
\includegraphics[width=1.0\textwidth]{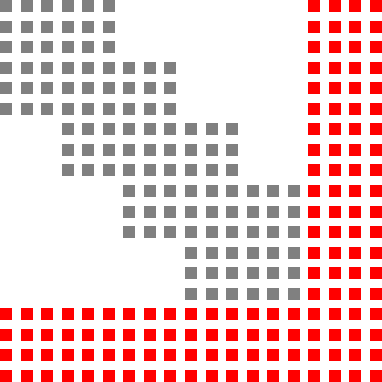}
\end{center}
\end{subfigure}
\caption{The disjoint partitions of \(\mcal{F}_A \coloneqq \bigcup_{k=1}^{K-2} \mcal{E}_{M,k}\) (gray) and \(\mcal{F}_B \coloneqq \mcal{E}_K \cup \mcal{E}_a\) (red).
}

%     \mcal{F}_A \coloneqq \bigcup_{k=1}^{K-2} \mcal{E}_{M,k},
%     \quad \mcal{F}_B \coloneqq \mcal{E}_K \cup \mcal{E}_a,
\label{fig:FA-FB}
\end{figure}

\begin{proof}[Proof of Theorem~\ref{thm:Z-cliques}]
% We may partition \(\mcal{F} = \mcal{F}_A \cup \mcal{F}_B\) with \(\mcal{F}_A \cap \mcal{F}_B = \emptyset\), such that
% \begin{align*}
%     \mcal{F}_A \coloneqq \bigcup_{k=1}^{K-2} \mcal{E}_{M,k},
%     \quad \mcal{F}_B \coloneqq \mcal{E}_K \cup \mcal{E}_a,
% \end{align*}
% where an illustration is shown in Figure~\ref{fig:FA-FB}.
Disjointly partition \(\mcal{F} = \mcal{F}_A \cup \mcal{F}_B\) as in Figure~\ref{fig:FA-FB}.
Also, partition the vertices \([N+1]\) as
\begin{align*}
    \mcal{V}_A \coloneqq \mcal{V}_1 \cup \cdots \cup \mcal{V}_{K-1},
    \quad \mcal{V}_B \coloneqq \mcal{V}_{K} \cup \{N+1\}.
\end{align*}
We first show that \(\mcal{G}\parens{\mcal{V}_A, \mcal{F}_A}\) is chordal, and then construct \(\mcal{G}([N+1], \mcal{F})\) by iteratively adding edges from \(\mcal{F}_B\).

To begin, we claim that \(\mcal{G}\parens{\mcal{V}_A, \mcal{F}_A}\) is chordal with maximal cliques \(\mcal{A}_k \coloneqq \mcal{V}_k \cup \mcal{V}_{k+1}\) for \(k = 1, \ldots, K-2\).
This follows directly from Lemma~\ref{lem:FA-chordal}.

We now construct \(\mcal{G}([N+1], \mcal{F})\) from \(\mcal{G}\parens{\mcal{V}_A, \mcal{F}_A}\) by iteratively adding each vertex in \(\mcal{V}_K \cup \{N+1\}\) and its corresponding edges in \(\mcal{F}_B\).
By the definition of \(\mcal{F}_B\), this iterative method means that each newly added \(v\) will have an edge to all the vertices already present.
Inductively by Lemma~\ref{lem:adding-connected-v}, this implies that \(\mcal{G}([N+1], \mcal{F})\) is chordal with maximal cliques
\(\mcal{C}_k \coloneqq \mcal{A}_k \cup \mcal{V}_k \cup \{N+1\}\)
for \(k = 1, \ldots, K-2\).
% \begin{align*}
%     \mcal{C}_k \coloneqq \mcal{A}_k \cup \mcal{V}_k \cup \{N+1\},
%     \quad k = 1, \ldots, K-2.
% \end{align*}
\end{proof}

\begin{lemma}
\label{lem:FA-chordal}
    The graph \(\mcal{G}\parens{\bigcup_{k=1}^{K-1} \mcal{V}_k, \bigcup_{k=1}^{K-2} \mcal{E}_{M,k}}\)
    % \begin{align*}
    %     \mcal{G}\parens{\bigcup_{k=1}^{K-1} \mcal{V}_k, \bigcup_{k=1}^{K-2} \mcal{E}_{M,k}}
    % \end{align*}
    is chordal with \(K-2\) maximal cliques, where the \(k\)th maximal clique is given by \(\mcal{A}_k \coloneqq \mcal{V}_k \cup \mcal{V}_{k+1}\).
\end{lemma}
\begin{proof}
    % For convenience let us define the function
    % \begin{align*}
    %     \mcal{V}(u) = k,
    %     \enskip \text{such that \(u \in \mcal{V}_k\)},
    % \end{align*}
    % that maps a vertex \(u\) to its corresponding set \(\mcal{V}_k\).
    For convenience we overload notation to define a lookup of vertices to indices: let \(\mcal{V}(u) = k\) iff \(u \in \mcal{V}_k\).
    This mapping is well-defined and unique, since \(\mcal{V}_1, \ldots, \mcal{V}_K\) are a disjoint partition of \([N]\) by construction.
    
    We now show that \(\mcal{G}\) is chordal.
    Let \(u_1, \ldots, u_l, u_1\) be a cycle of length \(l \geq 4\), where let \(u_{l+1} = u_1\).
    Because this is a cycle, there must be some \(u_a, u_b\) that lie in the same \(\mcal{V}_k\), i.e. \(k \coloneqq \mcal{V}(u_a) = \mcal{V} (u_b)\).
    These are then the cases:
    \begin{itemize}    
        \item If \(u_a, u_b\) are not consecutive vertices in the cycle (i.e. neither \(a + 1 \neq b\) nor \(b + 1 \neq a\)), then this means that the edge \((u_a, u_b)\) is a shortcut, since at least one of \(\mcal{E}_{M,k}\) or \(\mcal{E}_{M,k-1}\) is in the edge set of \(\mcal{G}\).

        \item If \(u_a, u_b\) are consecutive vertices (WLOG \(a + 1 = b\)), then this means that \(u_{a-1} \neq u_{b+1}\), since this cycle has length \(\geq 4\).
        Because the structure of each \(\mcal{E}_{M,k} = (\mcal{V}_k \cup \mcal{V}_{k+1})^2\) means that \(u_a, u_b\) share the same neighbors, the edges \((u_{a-1},u_b)\) and \((u_a, u_{b+1})\) therefore exist in \(\mcal{G}\), and are shortcuts for this cycle.
    \end{itemize}
    Because a shortcut always exists for cycles of length \(\geq 4\), conclude that \(\mcal{G}\) is chordal.

    We now analyze the maximal cliques of \(\mcal{G}\).
    Clearly each \(\mcal{A}_k\) is a clique, as its edges are given by \(\mcal{E}_{M,k} = \mcal{A}_k ^2\); we claim that each \(\mcal{A}_k\) is maximal.
    Suppose for some \(\mcal{A}_k\) there is a strictly larger clique \(\mcal{F} \supsetneq \mcal{A}_k\).
    At least one of the two following cases must occur:
    \begin{itemize}
        \item There exists \(j < k\) such that \(\mcal{V}_j \cap \mcal{F} \neq \emptyset\).
        Because \(\mcal{F}\) is a clique, there must be an edge between \(\mcal{V}_j\) and \(\mcal{V}_{k+1}\).
        However this is not possible, since \(\mcal{V}_{k+1}\) only has edges with \(\mcal{V}_k\) and \(\mcal{V}_{k+1}\).
        
        \item There exists \(j > k+1\) such that \(\mcal{V}_j \cap \mcal{F} \neq \emptyset\).
        Because \(\mcal{F}\) is a clique, there must be an edge between \(\mcal{V}_j\) and \(\mcal{V}_k\).
        However this is not possible since \(\mcal{V}_k\) only has edges with \(\mcal{V}_{k-1}\) and \(\mcal{V}_{k+1}\).
    \end{itemize}
    Thus each \(\mcal{A}_k\) is a maximal clique.
    Moreover, there are no more maximal cliques, since each \(\mcal{E}_{M,k}\) exactly corresponds to one \(\mcal{A}_k\).
    Thus, the maximal cliques of \(\mcal{G}\) are exactly \(\mcal{A}_1, \ldots, \mcal{A}_{K-2}\).
\end{proof}

\begin{lemma}
\label{lem:adding-connected-v}
    Let \(\mcal{G}(\mcal{V}, \mcal{E})\) be a chordal graph and \(\mcal{C}_1, \ldots, \mcal{C}_p\) its maximal cliques, and let \(\mcal{G}'(\mcal{V}', \mcal{E}')\) be a supergraph of \(\mcal{G}\) with a new vertex \(v'\) that is connected to every other vertex, i.e. \(\mcal{V}' = \mcal{V} \cup \{v'\}\) and \(\mcal{E}' = \mcal{E} \cup \{(v', v) : v \in \mcal{V}\}\).
    Then \(\mcal{G}'\) is chordal and its maximal cliques are \(\mcal{C}_1 ', \ldots, \mcal{C}_p '\) where each \(\mcal{C}_k ' = \mcal{C}_k \cup \{v'\}\).
\end{lemma}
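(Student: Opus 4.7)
The plan is to split the statement into its two assertions: chordality of $\mcal{G}'$ and the identification of its maximal cliques, handling them in sequence since the second uses similar arguments but is largely independent of chordality.

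For chordality, I would take an arbitrary cycle of length $\ell \geq 4$ in $\mcal{G}'$ and argue it admits a chord. There are two cases. If the cycle lies entirely in $\mcal{V}$, then it is a cycle in $\mcal{G}$, and since $\mcal{G}$ is chordal by hypothesis, the cycle has a chord, which remains a chord in $\mcal{G}'$ because $\mcal{E} \subseteq \mcal{E}'$. Otherwise the cycle contains $v'$, say of the form $v', v_1, v_2, \ldots, v_{\ell-1}, v'$. Since $\ell \geq 4$, we have $\ell - 1 \geq 3$, so there exists some $v_i$ with $2 \leq i \leq \ell - 2$ that is not a cycle-neighbor of $v'$. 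By construction of $\mcal{G}'$, the edge $(v', v_i)$ belongs to $\mcal{E}'$, and it is a chord of the cycle. This covers both cases, so $\mcal{G}'$ is chordal.

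For the maximal cliques, I would proceed in two directions. First I would show every maximal clique of $\mcal{G}'$ contains $v'$: if $\mcal{K}'$ is a clique of $\mcal{G}'$ with $v' \notin \mcal{K}'$, then $\mcal{K}' \subseteq \mcal{V}$ and $\mcal{K}' \cup \{v'\}$ is still a clique in $\mcal{G}'$ (because $v'$ is adjacent to every vertex of $\mcal{V}$), so $\mcal{K}'$ is not maximal. Next, I would show that the maximal cliques of $\mcal{G}'$ containing $v'$ are exactly the sets $\mcal{C}_k \cup \{v'\}$. In one direction, each $\mcal{C}_k \cup \{v'\}$ is a clique of $\mcal{G}'$ (again because $v'$ is universally adjacent), and it is maximal because any strictly larger clique would add some $w \in \mcal{V}$, whence $\mcal{C}_k \cup \{w\}$ would be a clique in $\mcal{G}$ strictly containing $\mcal{C}_k$, contradicting maximality of $\mcal{C}_k$ in $\mcal{G}$. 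Conversely, if $\mcal{K}' = \mcal{K} \cup \{v'\}$ is a maximal clique of $\mcal{G}'$ with $\mcal{K} \subseteq \mcal{V}$, then $\mcal{K}$ is a clique in $\mcal{G}$; if it were not maximal in $\mcal{G}$, any properly containing clique extended by $v'$ would give a clique in $\mcal{G}'$ properly containing $\mcal{K}'$, contradiction. Hence $\mcal{K} = \mcal{C}_k$ for some $k$.

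The main step requiring care is the cycle case distinction, but the universal adjacency of $v'$ makes the chord immediate once one notes that a cycle of length at least four must contain some vertex non-adjacent to $v'$ in the cycle ordering. Everything else is bookkeeping about cliques, using the fact that adjoining $v'$ to any subset of $\mcal{V}$ preserves the clique property, which reduces the clique structure of $\mcal{G}'$ to that of $\mcal{G}$ by direct translation.
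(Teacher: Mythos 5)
Your proposal is correct and follows essentially the same route as the paper's proof: the same case split on whether the cycle contains \(v'\) (with the chord supplied by the universal adjacency of \(v'\)), and the same two-step clique argument showing each \(\mcal{C}_k \cup \{v'\}\) is maximal and that no other maximal cliques exist, both by reducing to the clique structure of \(\mcal{G}\) via deletion of \(v'\). Your version is marginally more explicit (you name the non-cycle-adjacent vertex \(v_i\) giving the chord and note up front that every maximal clique of \(\mcal{G}'\) must contain \(v'\)), but the substance is identical.
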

\begin{proof}
    Let \(u_1, \ldots, u_k, u_1\) be a cycle in \(\mcal{G}'\) with \(k \geq 4\).
    If this cycle lies strictly in \(\mcal{G}\), then there exists a chord because \(\mcal{G}\) is chordal.
    Now suppose that \(u_1 = v'\) without loss of generality, then the edge \((u_i, v') \in \mcal{E}'\) for all vertices of this cycle by construction of \(\mcal{G}'\) --- meaning that there exists a chord.
    This shows that every cycle in \(\mcal{G}'\) of length \(k \geq 4\) has a chord, and thus \(\mcal{G}'\) is chordal.
    
    To show that each \(\mcal{C}_k '\) is a maximal clique of \(\mcal{G}'\), suppose by contradiction that some \(\mcal{C}_k'\) is strictly contained in another clique \(\mcal{C}'\) of \(\mcal{G}'\).
    But then \(\mcal{C}' \setminus \{v'\}\) is a clique in \(\mcal{G}\) which strictly contains some \(\mcal{C}_k = \mcal{C}_k ' \setminus \{v'\}\), which contradicts the maximality assumption of each \(\mcal{C}_k\) in \(\mcal{G}\).
    
    To show that the \(\mcal{C}_k '\) are the only maximal cliques of \(\mcal{G}'\), suppose by contradiction that there exists another maximal clique \(\mcal{C}'\) of \(\mcal{G}'\).
    Now observe that \(\mcal{C}' \setminus \{v'\}\) is a clique in \(\mcal{G}\), and so by assumption it must be contained in some \(\mcal{C}_k\).
    However, this would mean that \(\mcal{C}' \subseteq \mcal{C}_k '\), which contradicts the assumption that \(\mcal{C}'\) is maximal in \(\mcal{G}'\).
\end{proof}

% \subsection{Results for Theorem~\ref{thm:problems-equivalent}}

% \begin{proof}[Proof of Theorem~\ref{thm:problems-equivalent}]
% Since \(Z(\gamma) \in \mbb{S}^{N+1} (\mcal{F})\) and \(\mcal{G}([N+1], \mcal{F})\) is chordal with maximal cliques \(\{\mcal{C}_1, \ldots, \mcal{C}_p\}\), by Lemma~\ref{lem:chordal-psd} we have that \(Z(\gamma) \preceq 0\) iff each \(Z_k \preceq 0\) and
% \(Z(\gamma) = \sum_{k = 1}^{p} E_{\mcal{C}_k}^\top Z_k E_{\mcal{C}_k}\).
% \end{proof}

% \vfill

\section{Proofs of Section~\ref{sec:extensions}}
% \section{Proofs of Section ``Bolstering Chordal-DeepSDP with Double Decomposition and Adjacent-Layer Abstractions''}
\label{appendix:extensions-proofs}

\subsection{Results for Adjacent-Layer Connections}

\begin{proof}[Proof of Lemma~\ref{lem:Qadj}]
% We proceed by rewriting \eqref{eq:Qk-sum} into the form of Lemma~\ref{lem:Qadj}.
Rewrite \eqref{eq:Qk-sum} into the Lemma~\ref{lem:Qadj} form:
\begin{align*}
    &\sum_{k = 1}^{K-1}
        \begin{bmatrix} \\ \star \\ \\ \end{bmatrix}^\top
        Q_k
        \begin{bmatrix}
            W_k & 0 & b_k \\
            0 & I_{n_{k+1}} & 0 \\
            0 & 0 & 1
        \end{bmatrix}
        \begin{bmatrix} E_k \\ E_{k+1} \\ E_a \end{bmatrix}
        \mbf{z} \\
    &= \sum_{k = 1}^{K-1}
        \begin{bmatrix} \\ \star \\ \\ \end{bmatrix}^\top
        Q_k
        \begin{bmatrix} W_k x_k + b_k \\ x_{k+1} \\ 1 \end{bmatrix} \\
    &= \sum_{k=1}^{K-1}
        \begin{bmatrix} \\ \star \\ \\ \end{bmatrix}^\top
        P_k ^\top Q_k P_k
        \begin{bmatrix}
            W_1 x_1 + b_1 \\
            \vdots \\
            W_{K-1} x_{K-1} + b_{K-1} \\
            x_2 \\
            \vdots \\
            x_K \\
            1
        \end{bmatrix} \\
    &= \sum_{k = 1}^{K-1} 
        \begin{bmatrix} \\ \star \\ \\ \end{bmatrix}^\top
        P_k ^\top Q_k P_k
        \begin{bmatrix}
            A & b \\ B & 0 \\ 0 & 1
        \end{bmatrix}
        \mbf{z} \\
    &= \mbf{z}^\top
        \begin{bmatrix}
            A & b \\ B & 0 \\ 0 & 1
        \end{bmatrix}^\top
        \underbrace{
        \parens{\sum_{k = 1}^{K-1} P_k ^\top Q_k P_k}
        }_{Q_{\mrm{adj}}}
        \begin{bmatrix}
            A & b \\ B & 0 \\ 0 & 1
        \end{bmatrix}
        \mbf{z}
\end{align*}
where \(P_k\) is a projection acting on \(\mrm{vcat}(A\mbf{x} + b, B\mbf{x}, 1)\) that selects out the vector \(\mrm{vcat}(W_k x_k + b_k, x_{k+1}, 1)\).
\end{proof}

\begin{proof}[Proof of Theorem~\ref{thm:new-Z-sparsity}]
It suffices to show that each term of the sum~\eqref{eq:Qk-sum} has sparsity \(\mcal{E}\).
Consider the expression
\begin{align*}
    \begin{bmatrix} \\ \star \\ \\ \end{bmatrix}^\top
    Q_k
    \begin{bmatrix}
        W_k & 0 & b_k \\
        0 & I_{n_{k+1}} & 0 \\
        0 & 0 & 1
    \end{bmatrix}
    \begin{bmatrix} E_k \\ E_{k+1} \\ E_a \end{bmatrix},
\end{align*}
whose \((i,j)\) entry is dense iff exactly one of the following disjoint conditions holds:
\begin{enumerate}
    \item[(1)] \(i \in \mcal{V}_k\) and \(j \in \mcal{V}_k\)
    
    \item[(2)] \(i \in \mcal{V}_k\) and \(j \in \mcal{V}_{k+1}\) (or the symmetric case)
    
    \item[(3)] \(i \in \mcal{V}_k\) and \(j = N + 1\) (or the symmetric case)
    
    \item[(4)] \(i \in \mcal{V}_{k+1}\) and \(j \in \mcal{V}_{k+1}\)
    
    \item[(5)] \(i \in \mcal{V}_{k+1}\) and \(j = N + 1\) (or the symmetric case)
    
    \item[(6)] \(i = N + 1\) and \(j = N + 1\)
\end{enumerate}
If (1), (2), or (4) holds, then \((i,j) \in \mcal{E}_{M, k}\);
if (3), (5), or (6) holds, then \((i,j) \in \mcal{E}_{a}\).
Thus the \((i,j)\) entry is dense implies that \((i,j) \in \mcal{E}_M \cup \mcal{E}_a\), and so \((i,j) \in \mcal{E}\).
\end{proof}

\end{document}